\newtheorem{theorem}{Theorem}[section]
\newtheorem{lemma}[theorem]{Lemma}
\title{On the Convergence Behavior of Preconditioned Gradient Descent Toward the Rich Learning Regime}
\author{%
  Shuai~Jiang\\
  \texttt{sjiang@sandia.gov} \\
\And
  Alexey~Voronin \\
  \texttt{abvoron@sandia.gov} \\
  \And
  Eric~C.~Cyr \\
  \texttt{eccyr@sandia.gov} \\
  \And
  Ben Southworth \\
  \texttt{southworth@lanl.gov} \\
}
\begin{document}

\maketitle

\begin{abstract}
Spectral bias, the tendency of neural networks to learn low frequencies first, can be both a blessing and a curse. While it enhances the generalization capabilities by suppressing high-frequency noise, it can be a limitation in scientific tasks that require capturing fine-scale structures. The delayed generalization phenomenon known as grokking is another barrier to rapid training of neural networks. Grokking has been hypothesized to arise as learning transitions from the NTK to the feature-rich regime. This paper explores the impact of preconditioned gradient descent (PGD), such as Gauss-Newton, on spectral bias and grokking phenomena. We demonstrate through theoretical and empirical results how PGD can mitigate issues associated with spectral bias. 
Additionally, building on the rich learning regime grokking hypothesis, we study how PGD can be used to reduce delays associated with grokking. Our conjecture is that PGD, without the impediment of spectral bias, enables uniform exploration of the parameter space in the NTK regime. Our experimental results confirm this prediction, providing strong evidence that grokking represents a transitional behavior between the lazy regime characterized by the NTK and the rich regime. These findings deepen our understanding of the interplay between optimization dynamics, spectral bias, and the phases of neural network learning.
\end{abstract}



\section{Introduction}
Neural networks (NNs) are cornerstones of modern machine learning, demonstrating remarkable generalization performance using highly over-parameterized networks across a wide range of tasks, including image classification, natural language processing, and scientific applications. 
Though already mature, major challenges still plague the training of NNs. 
One difficulty is the tendency of NNs to learn low-frequency components first before slowly converging to high-frequencies, the so-called ``spectral bias'' or ``F-Principle'' \citep{rahaman2019spectral,xu2019training}.

One explanation of spectral bias is through the lens of the neural tangent kernels (NTKs), where~\citep{jacot2018neural} argues that the mode-dependent convergence results from disparities in the eigenvalues of a kernel. 
In some sense, spectral bias serves as a buttress against overfitting, allowing NNs to learn general patterns rather than overfitting on noise, and without relying on other techniques like regularization or dropout. 
However, not all applications using neural networks desire slow convergence to high frequency data. 
As a result, there has been debate over the usefulness of using higher-order optimization methods~\citep{wadia2021whitening,amari2020does,buffelli2024exact}. 
Throughout this document we use the phrase ``higher-order'' to indicate optimizers beyond gradient descent and its relatives, most notably Adam.  Specifically we explore the use of Gauss-Newton and Levenberg-Marquardt that use curvature information from the Hessian or its approximate.

The discussion of generalization of higher-order methods extends to the concept of grokking, a phenomenon where generalization on test data occurs long after the model has memorized the training data.
First seen in algorithmic datasets~\citep{power2022grokking}, many theories have been suggested which attempt to explain the delay including regularization with weights, dynamics of adaptive optimizers, and circuit efficiencies. 
We focus on two interpretations~\citep{kumar2024grokkingtransitionlazyrich,zhou2024rationale}, which argue that grokking arises as a result of transitioning from ``lazy''  NTK-dominated regime to a ``rich'', feature-learning regime. 
In light of these two interpretations, we argue that \emph{higher-order gradient descent methods allow for faster exploration of the NTK subspace} as diagrammed in \Cref{fig:diagram}, thereby allowing training to enter rich regime faster. We show theoretical and numerical evidence that higher-order methods can explore the NTK regime rapidly, but numerical evidence indicates they can struggle with generalization.

In this paper, we investigate the interplay between spectral bias, grokking, and higher-order methods, focusing specifically on preconditioned gradient descent (PGD) in the forms of Gauss-Newton and Levenberg-Marquardt.
Specifically, we:
\begin{itemize}
    \item Show that using PGD allows one to greatly accelerate convergence to all frequency modes in the lazy, or NTK, regime (\Cref{fig:fft-error}).
    \item Provide evidence that grokking is a transitional behavior by using PGD to explore the lazy regime uniformly, without spectral bias, thereby  eliminating the characteristic delay in generalization (\Cref{fig:grokking-modulo}). 
    \item Propose that the lack of generalization observed in~\citep{wadia2021whitening,buffelli2024exact} stems from higher-order  methods remaining close to the lazy regime. However, perhaps counterintuitively, we find that generalization can be achieved by transitioning to first-order methods after the lazy regime is exhausted (\Cref{fig:grokking-mnist-continue}).
\end{itemize}
We aim to provide an in-depth analysis of how these factors influence the training dynamics and generalization capabilities of neural networks. 

\section{Background}

\textbf{Spectral Bias and the Neural Tangent Kernel}

Spectral bias, the tendency of neural networks to learn lower-frequency components faster than higher-frequency ones, has emerged as a crucial aspect of understanding implicit regularization and generalization in deep learning~\citep{rahaman2019spectral,xu2019training,murray2022characterizing}. 
This inherent inductive bias arises from the interplay between architecture, initialization, and gradient dynamics~\citep{jacot2018neural,allen2019learning,huang2020dynamics,roberts2022principles}. 
While spectral bias can act as a form of implicit regularization in some tasks, it becomes a hindrance in scientific and engineering applications where convergence to high-frequency solutions is essential~\citep{xu2025understandingspectralbias,wang2022pinns}.
Theoretical analyses have further elucidated the mechanisms behind this spectral learning, linking it to the eigenstructure of the Neural Tangent Kernel (NTK) and the evolution of the Fourier spectrum of the learned function during training~\citep{bowman2023spectral}.
%
%
Consequently, understanding and potentially mitigating this spectral bias has become a significant area of research, particularly for tasks that require accurate reconstruction of fine-scale structure. 
The proposed approaches to combat this issue range from architectural adjustments
%
%
~\citep{jagtap2020extended,trask2022hierarchical,sitzmann2020implicit,hong2022activationmass,wang2024multi} to optimization techniques that reshape the loss landscape or rescale the gradient flow~\citep{tancik2020fourier,amari2020does,chen2024self}.

\textbf{Beyond Gradient Descent}

Stochastic gradient descent takes steps in a single direction scaled by a single learning rate. 
Progress is limited by the slowest NTK eigenmode, corresponding to the flattest direction in the parameter space, so training lingers in the ``lazy'' regime before transitioning into the feature-rich regime.
The Adam optimizer speeds up the convergence by assigning each parameter its own learning rate, so small gradients receive higher effective learning rates~\citep{kingma2014adam}. 
This effectively shortens the lazy regime, but still ignores cross-parameter interactions, so that convergence in the ill-conditioned directions still remains relatively slow. 
Curvature-aware methods replace a fixed/static learning rate with a smarter rescaling of the gradients, e.g.~\citep{wadia2021whitening,amari2020does,buffelli2024exact}, incorporating cross-parameter interaction to improve convergence. 
Given an operator $M_t$ that approximates local curvature of the loss function $L$, the optimization parameter update becomes 
\begin{align}\label{eq:gd_update}
    \bm{\theta}_{n+1} = \bm \theta_n - \eta M_t^{-1} \nabla_{\theta} L.
\end{align}
For symmetric positive definite (SPD) $M_t$, several equivalent interpretations of~\cref{eq:gd_update} exist: 
\begin{enumerate}[label=(\roman*)]
    \item one can view it as a gradient computed in the $M_t$-induced product; 
    \item as the ``natural gradient'' arising in a Riemannian space with non-Euclidean distance metric, which adjusts the descent path to reflect local curvature~\citep{amari1998natural}; 
    \item or finally as a change-of-basis, where parameters are transformed by $\sqrt{M_t}$, gradients are computed in the transformed space, and the result is mapped back to the original space~\citep{cun1998efficient,desjardins2015natural}. 
\end{enumerate}


%
%

Choosing an effective $M_t$ is a central challenge. 
Substituting the Hessian matrix for $M_t$ gives the classical Newton step, which has quadratic convergence near minima, but comes with a high computational price. 
Natural-gradient descent, stemming from the Fisher Information Matrix, also encodes second-order curvature information and is positive-definite by construction~\citep{amari1998natural,amari2019fisher}, making it a preferred operator to second-order methods. 
For mean square error (MSE) loss, the Fisher information matrix (FIM) coincides with the Gauss-Newton method~\citep{martens2020new,schraudolph2002fast} which only requires the Jacobian. 
Levenberg-Marquardt (LM) modifies the Gauss-Newton approach by adding a diagonal damping term guaranteeing numerical stability and preventing overly aggressive parameter updates~\citep{benzi2002preconditioning,more2006levenberg}. 
Kronecker-Factored Approximation Curvature (K-FAC) approximates the FIM using a block-diagonal matrix based on the products of layer-wise statistics~\citep{martens2015optimizing,botev2017practical}.

Ultimately, the goal is to choose a computable $M_t$ that conditions the optimization landscape so that the error is reduced uniformly across all modes. 

\textbf{Grokking}

Grokking is a phenomenon of delayed generalization first observed in algorithmic tasks~\citep{power2022grokking}.
During training, a model will first overfit the training data, showing poor test performance. Then after a prolonged period with minimal further reduction in training loss, the model suddenly begins to generalize, leading to an increase in test accuracy. 
Several hypotheses have been proposed to explain the delayed generalization. 
For instance, some theories point to the dynamics of adaptive optimizers~\citep{thilak2022slingshot}, or architectural bias in transformers that favors simpler, low-sensitivity functions that generalize better~\citep{bhattamishra2022simplicity}.
Another line of research suggests that grokking happens because, over time, training gradually pushes the model away from memorization patterns towards simpler and more general representations that explain the data better~\citep{barak2022hidden,varma2023explaininggrokkingweights,liu2022omnigrok}.

Two recent papers provide complementary views on grokking from a spectral bias perspective. 
The first, by~\citep{kumar2024grokkingtransitionlazyrich}, hypothesizes that grokking occurs as a result of inefficient training which initially stays confined to the NTK subspace and spectral bias limits it to learning only the lowest-frequency features.
Only later does the model escape this regime and move towards the generalization manifolds, resulting in a sudden improvement in test accuracy. 
The second, by~\citep{zhou2024rationale}, argues that grokking mainly arises from a spectral mismatch in the training and test data. 
Due to spectral bias, the model first learns low-frequency modes that are dominant in the train set but may not be predictive for the test set. 
The generalization emerges once the model begins to learn higher-frequency components that coincide with the test data.

\section{Exploring the NTK Regime}\label{sec:theory}

\subsection{Mitigating Spectral Bias with Preconditioning}\label{sec:ntk_exp_conv}

Here we discuss how spectral bias can be tempered by the use of PGD. 
Let $f(x, \bm{\theta}) : \mathbb{R} \times \mathbb{R}^p \to \mathbb{R}$ be a standard MLP where $x$ is the input and $\bm{\theta} \in \mathbb{R}^p$ the network parameters.
Specifically, $f$ is an MLP of depth $L$ and constant width $W$ with the form
\[
f(x, \bm{\theta}) = \bm{\theta}^{(L)} \sigma\left(\frac{1}{\sqrt{W}}\bm{\theta}^{(L-1)} \sigma\left(\ldots \sigma\left(\frac{1}{\sqrt{W}}\bm{\theta}^{(1)} x + \beta\bm{b}^{(1)}\right) \ldots \right)  + \beta\bm{b}^{(L-1)}\right) + \beta\bm{b}^{(L)}.
\]
We assume the same initialization and scaling of the weight matrices as those discussed in~\citep{jacot2018neural}.\footnote{While the theory highly depends on the NTK initialization for strict analysis, the experiments use more standard initializations unless otherwise indicated.}
Define $f(\bm{\theta})$ as the vectorized-shorthand of the quantities $\{f(x_i, \bm{\theta}) \}_{i=1}^N$. For simplicity, we consider the least-squares regression problem $\min_{\bm \theta} \mathcal L(\bm \theta)$ with $\mathcal L(\bm \theta) = \frac{1}{2} \norm{f(\bm{\theta}) - \bm y}^2$, where $\bm y$ is the $N$ dimensional labels.
We consider the continuous gradient flow underlying gradient descent by taking step size $\eta \to 0$
\begin{align*}
    \bm{\theta}_{n+1} &= \bm \theta_n - \eta \nabla_{\bm \theta} f(\bm \theta_n)^T (f(\bm{\theta}_n) - \bm y) \implies \frac{\partial \bm \theta}{\partial t} = - \nabla_{\bm \theta} f(\bm \theta(t))^T (f(\bm{\theta}(t)) - \bm y),
\end{align*}
with $\bm \theta(t)$ signifying the continuous flow of the weights. 
For sake of notation, let $\bm J_t = \nabla_{\bm \theta} f(\bm \theta(t))$ be the $N \times p$ Jacobian matrix at time $t$. 
Thus, in function space, we have the usual dynamics
\begin{align}\label{eqn:ntk-dynamics}
    \frac{\partial f(\bm \theta(t))}{\partial t} = \frac{\partial f(\bm \theta(t))}{\partial \bm \theta} \frac{\partial \bm \theta(t)}{\partial t} = -\bm J_t \bm J_t^T(f(\bm{\theta}(t)) - \bm y).
\end{align}

Now define the error $\bm e(t) = f(\bm \theta(t)) - \bm y$ and the (time-dependent) NTK matrix $K_t\coloneqq \bm J_t\bm J_t^T$. Note that $\bm K_t$ is symmetric positive semi-definite (assuming sufficiently overparametrized), $\bm K_t$ is also strictly positive definite with high probability~\citep{bowman2023spectral,mjt_dlt}), with an orthogonal basis of eigenvectors. Then from \eqref{eqn:ntk-dynamics} we can write out an error evolution equation with respect to the eigendecomposition of $\bm K_t$. 
\begin{lemma}
    For $ 1 \le  i \le n $, let $\bm \Lambda = \text{diag}(\lambda_i) \geq 0$ be the eigenvalues of $\bm K_t$, and $\hat{e}_i$ the error constant associated with the $i$th eigenvector. Then continuous gradient flow of $\hat{e}_i$ takes the form
    \[
        \frac{\partial}{\partial t}\hat{e}_i = -\lambda_i(\bm {e}) \hat{e}_i.
    \]
\end{lemma}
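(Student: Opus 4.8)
The plan is to start from the function-space gradient flow \eqref{eqn:ntk-dynamics} and turn it into an evolution equation for the error. Since the labels $\bm y$ are fixed in time, $\partial_t \bm e(t) = \partial_t f(\bm\theta(t)) = -\bm J_t\bm J_t^T\bm e(t) = -\bm K_t \bm e(t)$, so the whole problem reduces to analyzing the linear (but a priori time-dependent) ODE $\dot{\bm e} = -\bm K_t \bm e$ through the spectral decomposition of the symmetric positive semi-definite matrix $\bm K_t$.

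Next I would fix a time $t$, write $\bm K_t = \sum_{i=1}^n \lambda_i \bm v_i \bm v_i^T$ with $\{\bm v_i\}_{i=1}^n$ an orthonormal eigenbasis (guaranteed by symmetry, with strictly positive $\lambda_i$ under the overparametrization assumption already invoked above), and introduce the error coordinates $\hat e_i(t) = \bm v_i^T \bm e(t)$, so that $\bm e(t) = \sum_i \hat e_i(t)\,\bm v_i$. Differentiating and substituting $\dot{\bm e} = -\bm K_t \bm e$ gives $\dot{\hat e}_i = \bm v_i^T \dot{\bm e} + \dot{\bm v}_i^T \bm e = -\bm v_i^T \bm K_t \bm e + \dot{\bm v}_i^T\bm e$. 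Because $\bm K_t$ is symmetric with $\bm K_t \bm v_i = \lambda_i \bm v_i$, the first term equals $-\lambda_i \bm v_i^T \bm e = -\lambda_i \hat e_i$, which is exactly the claimed right-hand side; integrating the resulting decoupled scalar ODEs then yields the familiar $\hat e_i(t) = e^{-\lambda_i t}\,\hat e_i(0)$, making the mode-dependent convergence rate --- and hence spectral bias --- transparent.

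The one genuine subtlety, and the step I expect to be the crux, is the term $\dot{\bm v}_i^T \bm e$ arising from the time-dependence of the eigenbasis: $\|\bm v_i\|=1$ forces $\dot{\bm v}_i \perp \bm v_i$, but not $\dot{\bm v}_i \perp \bm e$, so this term does not vanish for free. This is exactly where the lazy / NTK regime enters: in the overparametrized (infinite-width) limit of~\citep{jacot2018neural} the NTK is frozen, $\bm K_t \equiv \bm K_0$, hence $\dot{\bm v}_i = 0$ and the extra term disappears identically, giving $\dot{\hat e}_i = -\lambda_i \hat e_i$ on the nose. Away from that limit the statement should be read as the instantaneous decay rate obtained by projecting the error onto the eigenbasis of $\bm K_t$ at the current time, with the $\dot{\bm v}_i$ contribution being the vanishing (e.g.\ $O(W^{-1/2})$) correction controlled by the high-probability kernel-stability estimates already cited in the text; I would make that precise by invoking those near-constancy bounds on $\bm K_t$. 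Once this term is dispatched, only the one-line coefficient matching above remains.
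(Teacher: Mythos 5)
Your proposal is correct and follows essentially the same route as the paper: derive $\partial_t \bm e = -\bm K_t\bm e$ from \eqref{eqn:ntk-dynamics} and project onto the orthonormal eigenbasis of the symmetric kernel $\bm K_t$ (the paper states this lemma without a formal proof, treating it as an immediate consequence). Your explicit handling of the $\dot{\bm v}_i^T\bm e$ term from the time-varying eigenbasis is in fact more careful than the paper, which absorbs this issue into the informal notation $\lambda_i(\bm e)$ and only obtains the genuinely decoupled ODE in the infinite-width limit \eqref{eq:lin-ntk}.
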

Here we have that $\lambda_i$ depends on $\bm{e}$ because the matrix $\bm K_t$ and the corresponding eigenvalue and basis of eigenmodes are evolving nonlinearly with $\bm{e}$. 
However, as the width $W$ of $f$ tends towards infinity, we have that $\bm K_t \to \bm K^\infty$ where $\bm K^\infty$ is the (constant in time) symmetric positive definite neural tangent kernel (NTK). In this regime, we arrive at a linear decoupled evolution in error modes,
\begin{equation}\label{eq:lin-ntk}    
        \frac{\partial}{\partial t}\hat{e}_i = -\lambda_i \hat{e}_i.
\end{equation}


\Cref{eq:lin-ntk} precisely describes spectral bias, because the convergence of each mode is defined by the corresponding eigenvalue of $\bm K^\infty$, and global error convergence is defined by the condition number of the NTK matrix. 
In particular, the learning rate must be small enough for stable evolution of the largest eigenvalue of $\bm K^\infty$, but this means error modes associated with small eigenvalues converge like $1 - \lambda_k/\lambda_N \ll 1$ for $k\ll N$. 
Empirically, only $\mathcal O(1)$ eigenvalues are ``large'', so most modes converge slowly~\citep{murray2022characterizing} .

As in standard numerical linear algebra though, we can apply preconditioning to normalize contours towards a more isotropic landscape for convergence~\citep{benzi2002preconditioning}.
Let $\mu > 0$ be a regularization parameter, and consider the Levenberg-Marquardt (LM) algorithm~\citep{more2006levenberg}, which evolves the weights according to 
    $\bm{\theta}_{n+1} = \bm \theta_n - \eta (\mu \bm I + \bm J^T_t \bm J_t)^{-1} \bm J_t^T (f(\bm{\theta}) - \bm y).$
In the context of least squares, this is akin to ridge regression~\citep{hoerl1970ridge}.
We note that the inversion of the matrix is well-defined due to the inclusion of the $\mu$ factor. 
Performing similar gradient flow manipulations as above, we arrive at the following continuous dynamics for the LM-preconditioned error evolution
    $\frac{\partial}{\partial t}\bm e = - \bm J_t(\mu \bm I + \bm J^T_t \bm J_t)^{-1} \bm J_t^T \bm e,$
where $\bm J_t$ implicitly depends on $\mathbf{e}$. The following shows that the conditioning of the dynamics is greatly improved, meaning that spectral bias during training is reduced.
\begin{lemma}\label{lem:lm-error}
    Let $\bm \Lambda$ and $\hat{e}_i(t)$ be as before. Then the LM-preconditioned continuous gradient flow of $\hat{e}_i$ takes the form
    \begin{equation}\label{eq:lm-err}
        \frac{\partial}{\partial t}\hat{e}_i = -\frac{\lambda_i(\bm{e})}{\mu + \lambda_i(\bm{e})} \hat{e}_i.
    \end{equation}
\end{lemma}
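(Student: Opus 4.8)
The plan is to reduce the LM-preconditioned flow to exactly the decoupled form of the unpreconditioned lemma by showing that the preconditioner and the NTK share an eigenbasis. First I would apply the push-through (matrix inversion) identity: starting from $(\mu\bm I_p + \bm J_t^T\bm J_t)\bm J_t^T = \bm J_t^T(\mu\bm I_N + \bm J_t\bm J_t^T)$ and inverting on the appropriate side (equivalently, substituting an SVD $\bm J_t = \bm U\bm\Sigma\bm V^T$), one gets
\[
\bm J_t(\mu\bm I + \bm J_t^T\bm J_t)^{-1}\bm J_t^T \;=\; \bm J_t\bm J_t^T(\mu\bm I + \bm J_t\bm J_t^T)^{-1} \;=\; \bm K_t(\mu\bm I + \bm K_t)^{-1}.
\]
Hence the error evolution becomes $\tfrac{\partial}{\partial t}\bm e = -\,\bm K_t(\mu\bm I + \bm K_t)^{-1}\bm e$, i.e. the ``effective kernel'' driving the preconditioned dynamics is the rational matrix function $h(\bm K_t)$ with scalar symbol $h(\lambda) = \lambda/(\mu+\lambda)$.

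Next I would diagonalize. Since $\bm K_t$ is symmetric positive semidefinite it admits an orthonormal eigenbasis $\{\bm u_i\}$ with eigenvalues $\lambda_i = \lambda_i(\bm e)\ge 0$; because $\mu>0$ the matrix $\mu\bm I + \bm K_t$ is invertible, and $h(\bm K_t)$ is diagonalized by the very same $\{\bm u_i\}$ with eigenvalues $h(\lambda_i) = \lambda_i/(\mu+\lambda_i)$. Writing the error coefficient along the $i$th eigenvector as $\hat e_i = \bm u_i^T\bm e$ and proceeding exactly as in the proof of the first lemma — projecting the error flow onto $\bm u_i$ while taking the instantaneous eigenframe as reference — gives
\[
\frac{\partial}{\partial t}\hat e_i \;=\; \bm u_i^T\frac{\partial \bm e}{\partial t} \;=\; -\,\bm u_i^T\,\bm K_t(\mu\bm I+\bm K_t)^{-1}\bm e \;=\; -\,\frac{\lambda_i(\bm e)}{\mu+\lambda_i(\bm e)}\,\hat e_i,
\]
which is precisely \eqref{eq:lm-err}. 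As a sanity check, $\mu\to 0$ recovers $\tfrac{\partial}{\partial t}\hat e_i = -\lambda_i(\bm e)\hat e_i$, and in the infinite-width limit $\bm K_t\to\bm K^\infty$ the $\lambda_i$ become constant in time, so the modes decouple and decay exponentially at the damped rates $\lambda_i/(\mu+\lambda_i)$.

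The main obstacle is the subtlety already present in the first lemma: the eigenbasis $\{\bm u_i(t)\}$ of $\bm K_t$ is itself evolving, so strictly $\tfrac{\partial}{\partial t}\bigl(\bm u_i(t)^T\bm e(t)\bigr)$ carries an extra term $\dot{\bm u}_i(t)^T\bm e(t)$. I would dispose of this in the same way the paper handles the unpreconditioned case: interpret $\hat e_i$ as the component in the instantaneous eigenframe (reading the identity pointwise in $t$), or pass to the infinite-width regime where $\bm K_t$, and hence its eigenvectors, are frozen and the cross term vanishes identically. Everything else — the push-through identity and the functional calculus for $h(\bm K_t)$ — is routine. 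I would close by remarking that $h$ is increasing on $[0,\infty)$ with range contained in $[0,1)$, so the spread of the preconditioned rates $\{\lambda_i/(\mu+\lambda_i)\}$ is strictly smaller than that of the bare NTK spectrum $\{\lambda_i\}$, which is exactly the improved-conditioning claim made just before the lemma.
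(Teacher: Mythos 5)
Your proof is correct and follows essentially the same route as the paper's: the paper substitutes the SVD $\bm J_t=\bm U_t\bm\Lambda_t^{1/2}\bm V_t^T$ and computes $\bm J_t(\mu\bm I+\bm J_t^T\bm J_t)^{-1}\bm J_t^T=\bm U_t\,\mathrm{diag}\bigl(\lambda_i/(\mu+\lambda_i)\bigr)\bm U_t^T$ directly, which is exactly your push-through/functional-calculus identity $\bm K_t(\mu\bm I+\bm K_t)^{-1}$ read in the eigenbasis of $\bm K_t$. Your explicit treatment of the time-varying eigenframe (the $\dot{\bm u}_i^T\bm e$ term) is a point the paper leaves implicit, but it does not change the argument.
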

As before, in the NTK/infinite width regime, we may drop the dependence of $\lambda_i$ on $\bm{e}$. 
Then, we see that the mapping from $\lambda_i \to \frac{\lambda_i}{\mu + \lambda_i}$ greatly improves the conditioning compared with~\Cref{eq:lin-ntk}. 
Assuming $\lambda_1>0$, for gradient descent we have the condition number $\kappa_{\text{GD}} = \frac{\lambda_N}{\lambda_1}$, whereas for LM, we have $\kappa_{\text{LM}} \coloneqq \frac{\lambda_N}{\lambda_1} \left( \frac{\lambda_1 + \mu}{\lambda_N + \mu}\right) \ll \kappa$ in general. 
In particular, if $\mu = \lambda_1$, then $ \kappa_{\text{LM}} \approx 2$.

As $\mu \to 0$, LM converges to a Gauss-Newton (GN) iteration, which takes the form
    $\bm{\theta}_{n+1} = \bm \theta_n - \eta (\bm J^T_t \bm J_t)^{\dagger} \bm J^T_t (f(\bm{\theta}) - \bm y),$
where $\dagger$ denotes the matrix pseudoinverse due to the fact that the Jacobian may be extremely ill-conditioned or singular. 
We assume that the pseudo-inverse is calculated with a cutoff of $\varepsilon$. 
Note that the LM dynamics can be interpreted as a trust region variation of the GN optimization steps. 
\begin{lemma}\label{lem:gn-error}
        Let $\bm \Lambda$ and $\hat{e}_i(t)$ be as before, then if Gauss-Newton preconditioned gradient descent is used, then 
    \[
            \frac{\partial}{\partial t}\hat{e}_i = -\mathbbm{1}_{\lambda_i(\bm{e}) > \varepsilon}\hat{e}_i.
    \]
\end{lemma}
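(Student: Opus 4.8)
The plan is to mirror the argument behind \Cref{lem:lm-error}, replacing the scalar filter $\lambda/(\mu+\lambda)$ by the spectral filter induced by the truncated pseudoinverse. Starting from the Gauss-Newton error dynamics $\frac{\partial}{\partial t}\bm e = -\bm J_t(\bm J_t^T\bm J_t)^{\dagger}\bm J_t^T\bm e$, I would introduce a full singular value decomposition $\bm J_t = \bm U\bm\Sigma\bm V^T$. Since $\bm K_t = \bm J_t\bm J_t^T = \bm U\bm\Sigma\bm\Sigma^T\bm U^T$, the left singular vectors $\bm u_i$ are exactly the orthonormal eigenvectors of $\bm K_t$ with eigenvalues $\lambda_i = \sigma_i^2$, so working in the $\bm U$-basis coincides with working in the eigenbasis used in the previous lemmas.

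Next I would compute the operator $\bm J_t(\bm J_t^T\bm J_t)^{\dagger}\bm J_t^T$ explicitly. With the $\varepsilon$-truncated pseudoinverse, $(\bm J_t^T\bm J_t)^{\dagger} = \bm V\bm D\bm V^T$ where $\bm D$ is diagonal with entries $\lambda_i^{-1}\mathbbm{1}_{\lambda_i>\varepsilon}$ on the range and $0$ on the null part, so that
\[
\bm J_t(\bm J_t^T\bm J_t)^{\dagger}\bm J_t^T = \bm U\bm\Sigma\bm V^T\bm V\bm D\bm V^T\bm V\bm\Sigma^T\bm U^T = \bm U\,(\bm\Sigma\bm D\bm\Sigma^T)\,\bm U^T = \sum_i \mathbbm{1}_{\lambda_i>\varepsilon}\,\bm u_i\bm u_i^T,
\]
which is precisely the orthogonal projector $P_\varepsilon$ onto the span of the eigenvectors of $\bm K_t$ whose eigenvalue exceeds the cutoff. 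Expanding the error as $\bm e = \sum_i \hat e_i\bm u_i$ and projecting the dynamics onto $\bm u_i$ exactly as in the derivation of \eqref{eq:lm-err}, the identity $P_\varepsilon\bm u_i = \mathbbm{1}_{\lambda_i>\varepsilon}\bm u_i$ gives $\frac{\partial}{\partial t}\hat e_i = -\mathbbm{1}_{\lambda_i(\bm e)>\varepsilon}\hat e_i$, the dependence of $\lambda_i$ on $\bm e$ again being inherited from the nonlinear evolution of $\bm K_t$. As a consistency check this is the $\mu\to 0$ limit of \eqref{eq:lm-err}: the filter $\lambda_i/(\mu+\lambda_i)\to 1$ on the retained modes, while the truncation forces the rate to $0$ on modes with $\lambda_i\le\varepsilon$.

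The only genuine subtlety, shared with the earlier lemmas, is that the eigenframe $\{\bm u_i(t)\}$ rotates in time, so ``$\frac{\partial}{\partial t}\hat e_i$'' must be read as the evolution of the coefficient in the instantaneous eigenbasis --- equivalently, one passes to the infinite-width/NTK limit where $\bm K_t\equiv\bm K^\infty$ is fixed and the modes genuinely decouple --- and I would dispatch it with the same convention already adopted there. A minor bookkeeping point is whether the pseudoinverse cutoff is imposed on the singular values of $\bm J_t$ or on the eigenvalues of $\bm K_t$; placing it on $\lambda_i = \sigma_i^2$ makes the statement $\mathbbm{1}_{\lambda_i>\varepsilon}$ come out verbatim, and any other choice merely rescales $\varepsilon$.
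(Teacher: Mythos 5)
Your proposal is correct and follows essentially the same route as the paper's proof: both insert the SVD of $\bm J_t$, observe that the $\varepsilon$-truncated pseudoinverse turns $\bm J_t(\bm J_t^T\bm J_t)^{\dagger}\bm J_t^T$ into a diagonal of ones and zeros in the left-singular-vector (i.e.\ NTK eigenvector) basis, and read off the mode-wise rates. Your added remarks on the projector interpretation, the $\mu\to 0$ consistency check, and the rotating eigenframe are sensible elaborations of the same argument.
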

This means that the GN dynamics result in essentially \emph{all modes converging} at a uniform rate (up to conditioning tolerance $\varepsilon$), at a cost of the smallest eigenvalues (and typically geometrically highest frequencies) $\lambda_i<\varepsilon$ not converging due to numerical necessity of the pseudo-inverse.

Of course, in practice for GN or LM, the additional computation is not trivial. Inverting the large matrix $(\mu \bm I + \bm J^T_t \bm J_t)$ can be computationally expensive. To address this, the resulting linear system is typically solved with iterative methods such as the conjugate gradient algorithm~\citep{gargiani2020cg,cai2019gcn}, or by applying identities like the Sherman-Morrison-Woodbury formula~\citep{ren2019woodbury}. These solvers are often paired with a line search to determine an appropriate step size~\citep{muller2023achieving,jnini2024gauss}; detailed computational discussions are deferred to the appendix.

While the above analysis is straightforward, the results are primarily meaningful in the NTK regime where $\bm K_t$ is linear or nearly linear in its evolution (e.g., either via initialization, scaling or large width~\citep{chizat2019lazy,jacot2018neural}). 
This is the case even with more sophisticated convergence analyses, e.g., \citep{cayci2024gauss}, with convergence in the rich regime a largely open question.
However, this theory demonstrates how early preconditioning can accelerate training and exploration of the NTK or a generally linear regime.

\subsection{Grokking beyond the NTK regime}
\citet{kumar2024grokkingtransitionlazyrich} first showed that neither weight norms nor adaptive optimizers are necessary for grokking. 
In~\Cref{fig:mnist-weight}, we see the classical grokking behavior on MNIST~\citep{deng2012mnist} as described in~\citet{liu2022omnigrok} using a two-layer MLP with AdamW, Adam and PGD with LM dynamics. 
The exact values of the accuracy notwithstanding\footnote{The test accuracy of the PGD can be worse compared to first-order methods as discussed in~\citep{buffelli2024exact,wadia2021whitening}.}, note that the weight norms can increase, decrease or stay the same depending on the optimizer as one trains and the model generalizes.
The same grokking behavior appears for non-adaptive gradient descent, meaning grokking can occur in spite of adaptivity or weight decay~\citep{thilak2022slingshot}. 

\begin{figure}[t] 
    \centering
    \includegraphics[width=.85\linewidth]{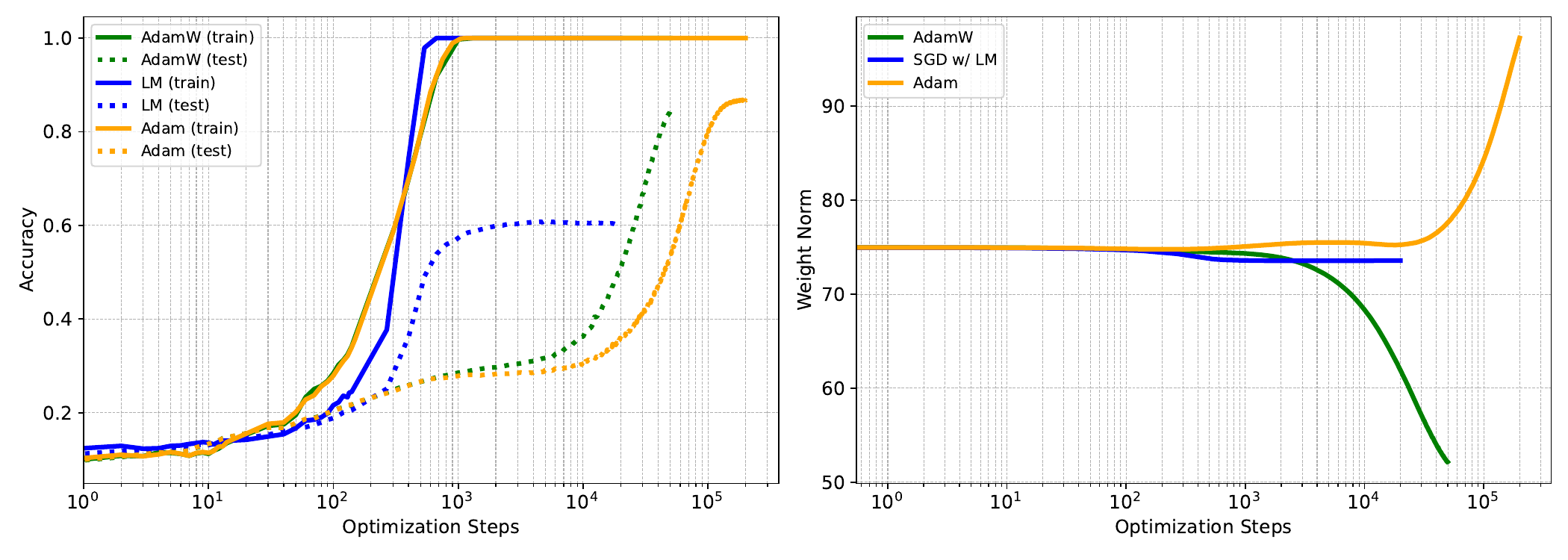}
    \includegraphics[width=.85\linewidth]{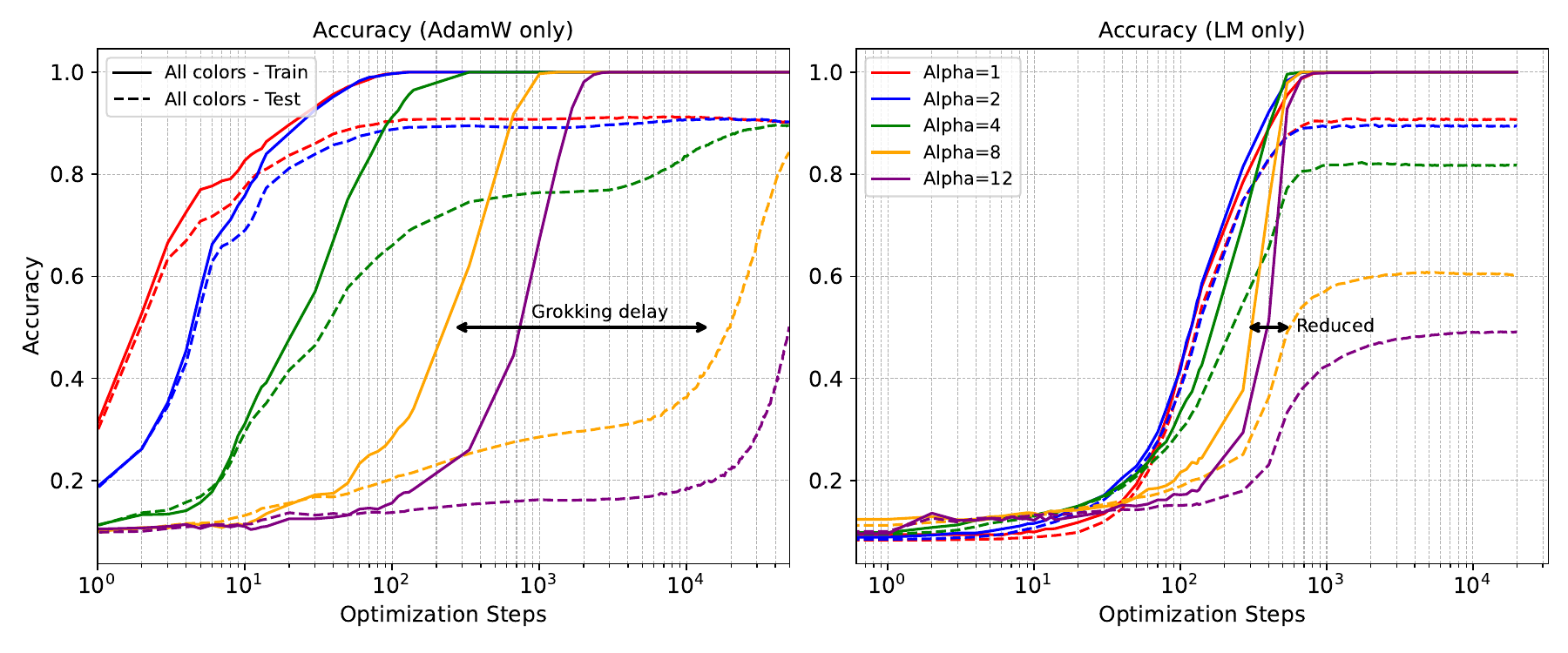}
    \caption{
    MNIST grokking induced by multiplying the initialization by $\alpha$. 
    \textbf{Top left:} Train (solid) and test (dotted) accuracy for different optimizers for $\alpha=8$.
    LM is able to shorten generalization delay, but cannot obtain as high a generalization accuracy. 
    \textbf{Top right:} Weight norms corresponding to top left; grokking occurs regardless of whether norms grow, decay, or remain stable. 
    \textbf{Bottom left:} AdamW exhibits a pronounced delay between train and test accuracy.
    \textbf{Bottom Right:} LM ($\mu$ fixed) compresses the test-delay across $\alpha$, but attains lower final test accuracy than first-order methods.
    }
    \label{fig:mnist-weight}
\end{figure}

\citet{zhou2024rationale} and \citet{kumar2024grokkingtransitionlazyrich} both argue that grokking occurs due to a mismatch between ``ideal'' training dynamics and reality. 
%
%
By ``ideal'', we refer to an optimization scenario where feature learning occurs immediately and uniformly across all relevant modes, without delays caused by suboptimal model initialization and optimizer dynamics. 
In reality, the practical behavior of neural networks is quite different; convergence is often biased toward certain modes or delayed due to suboptimal initiation and optimization dynamics.  
The former argues that grokking occurs due to frequency-dependent convergence, where spectral bias causes the model to fit low-frequency components first, delaying generalization of higher frequency modes present in the test data.
The latter argues that neural networks tend to stay in the lazy training regime, characterized by the subspace defined by 
\begin{equation}\label{eq:linear_lazy_regime}
    f(x, \bm \theta) \approx f(x, \bm{\theta}_0) + \bm J_0 (\bm \theta - \bm \theta_0)
\end{equation}
where $\bm \theta_0$ are the initial weights. 
The authors hypothesized that feature learning can only occur after escaping the lazy regime. 
Both views argue that grokking stems from spectral bias combined with prolonged confinement to the lazy training regime. 

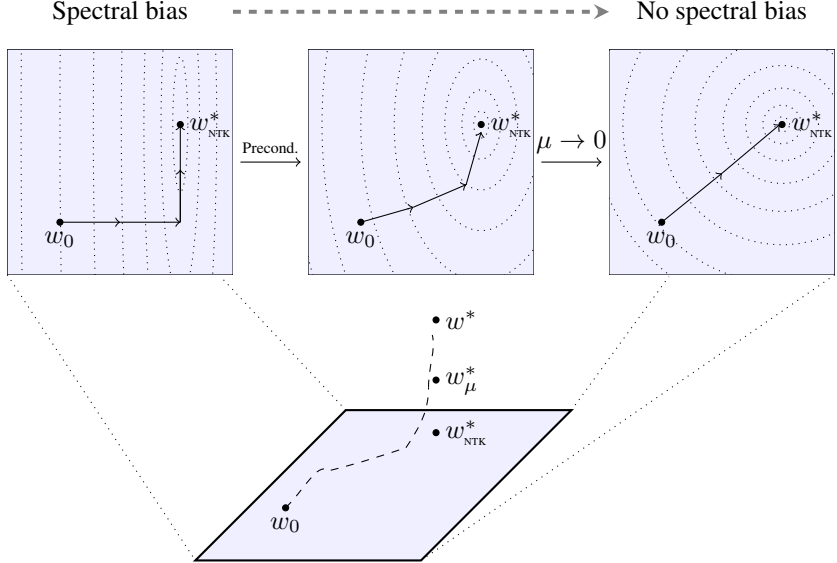
\begin{figure}[ht]
    \centering
    \scalebox{0.75}{

\begin{tikzpicture}[scale=1.0]
\begin{scope}[shift={(2.5, -3.8)}]

\fill[blue!30, opacity=0.2] (0, 0) -- (3, 0) -- (5, 2) -- (2, 2) -- cycle;
\draw[thick] (0, 0) -- (3, 0) -- (5, 2) -- (2, 2) -- cycle;
\draw[dashed] (1.2,0.7,0) .. controls (1.5,1.0,0) and (1.6,1.3,0) .. (1.8,1.2,0);
\draw[dashed] (1.8,1.2,0) .. controls (2.2,1.3,0) and (2.5,1.4,0) .. (2.8,1.5,0);
\draw[dashed] (2.8,1.5,0) .. controls (3.0,1.8,0) and (3.1,2.0,0) .. (3.1,2.3,0);
\draw[dashed] (3.1,2.3,0) .. controls (3.1,2.6,0) and (3.2,2.8,0) .. (3.15,3,0);

\draw[fill=black] (1.2,0.7,0) circle (0.04) node[below] {$w_0$};
\draw[fill=black] (3.2,1.7,0) circle (0.04) node[right] {$w^*_{\scalebox{0.4}{NTK}}$};
\draw[fill=black] (3.2,3.2) circle (0.04) node[right] {$w^*$};
\draw[fill=black] (3.2,2.4)  circle (0.04) node[right] {$w^*_{\mu}$};

    \coordinate (PointAp) at (0, 0);
    \coordinate (PointBp) at (3, 0);
    \coordinate (PointCp) at (5, 2);
    \coordinate (PointDp) at (2, 2);

\end{scope}


\begin{scope}[shift={(0, 5)}]
\node[above] at (1.5, -1.8) {Spectral bias};
\draw[->, gray,dashed, >=stealth,line width=1.5pt] (3, -1.5) -- (8, -1.5);
\node[above] at (9.5, -1.8) {No spectral bias};
    \foreach \x in {0, 4, 8} {
        \begin{scope}
            \clip (\x, -5) rectangle ++(3, 3);

            \draw (\x, -5) rectangle ++(3, 3);
            \fill[blue!30, opacity=0.2] (\x, -5) rectangle ++(3, 3);

            \draw[fill=black] (\x+0.7, -4.3) circle (0.04) node[below] {$w_0$};
            \draw[fill=black] (\x+2.3, -3.0) circle (0.04) node[right] {$w^*_{\scalebox{0.4}{NTK}}$};

            \foreach \y in {1, 1.6, 2.2,2.8, 3.4, 4.,4.6, 5} {
                \draw[dotted] (\x+2.3, -3.0) ellipse ({.1*\y^2} and {.9/(\x +1)*\y^2});
            }
        \end{scope}

        \ifnum\x=0
            \draw[->] (\x+0.7, -4.3) -- (\x+1.5, -4.3);
            \draw[->] (\x+1.5, -4.3) -- (\x+2.3, -4.3);
            \draw[->]  (\x+2.3, -4.3) -- (\x+2.3, -3.6);
            \draw[->]  (\x+2.3, -4.3) -- (\x+2.3, -3.01);

            \draw[->] (\x+3.1, -3.5) -- (\x+3.9, -3.5) node[midway, above] {\text{\tiny Precond.}};
        \fi
        \ifnum\x=4
            \draw[->] (\x+0.7, -4.3) -- (\x+1.4, -4.1);
            \draw[->] (\x+1.4, -4.1)-- (\x+2.1, -3.8);
            \draw[->] (\x+2.1, -3.8) -- (\x+2.3, -3.1);
            
            \draw[->] (\x+3.1, -3.5) -- (\x+3.9, -3.5) node[midway, above] {$\mu \to 0$};
        \fi
        \ifnum\x=8
            \coordinate (midpoint) at ({(\x+0.7+\x+2.3)/2}, {(-4.3+-3.0)/2});
            \draw[->] (\x+0.7, -4.3) -- (midpoint);
            \draw[->] (midpoint) -- (\x+2.27, -3.001);
        \fi
    }
    \coordinate (PointA) at (0, -5);
    \coordinate (PointB) at (0, -2);
    \coordinate (PointC) at (11, -5);
    \coordinate (PointD) at (11, -2);
\end{scope}
    \draw[-, dotted] (PointA) -- (PointAp);
    \draw[-, dotted] (PointDp) -- ($(PointDp)!0.37!(PointB)$);
    \draw[-, dotted] (PointC) -- (PointBp);
    \draw[-, dotted] (PointCp) -- ($(PointCp)!0.37!(PointD)$);
\end{tikzpicture}

    }    
    \caption{
    \textbf{Top-left:} With SGD, spectral bias reflects ill-conditioned NTK  curvature, resulting in the trajectory from $w_0$ to $w^*_{\text{\tiny NTK}}$ that bends along level sets, so progress differs across directions. 
    \textbf{Top-middle:} Preconditioning (LM, $\mu>0$) uses curvature/Hessian information (Gauss-Newton) to rescale directions, producing a more direct path. 
    \textbf{Top-right:} As $\mu\to 0$ (GN), updates nearly equalize progress across directions on the NTK manifold, effectively removing spectral bias. 
    \textbf{Bottom:} Optimization first approaches the NTK solution $w^*_{\text{\tiny NTK}}$ on the lazy subspace (plane); the LM/GN endpoint $w^*_\mu$ can under-generalize relative to the true target $w^*$. Switching to a first-order method moves off-subspace and recovers final generalization.
    }
    \label{fig:diagram}
\end{figure}

Building upon these theories, we present additional evidence through the usage of PGD as detailed in the diagram in~\Cref{fig:diagram}. 
As discussed theoretically in~\Cref{sec:ntk_exp_conv}, preconditioning accelerates convergence in the NTK regime, particularly in the attenuation of spectral bias. 
%
If grokking stems from frequency mismatch and prolonged lazy-regime confinement, then \emph{PGD should reduce the time-to-generalization} by accelerating NTK exploration. 
This is observed in~\Cref{fig:grokking-freq-error-plot,fig:grokking-modulo}. While PGD compresses the delay, final generalization can be lower; switching to first-order methods after the lazy regime restores accuracy -- opposite to typical PDE practice where one often finishes with second-order.

\section{Examples}\label{sec:examples}
To evaluate the theoretical predictions and discussions from~\Cref{sec:theory}, we consider convergence and grokking experiments. 
These experiments show how different optimization approaches behave in the NTK/lazy regime and highlight transitions into the feature-rich regime, where our assumptions begin to fail. 
The results from~\Cref{sec:exp_conv_results} are:
\begin{itemize}
    \item Higher-order PGD methods such as LM and GN accelerate convergence of \emph{all} frequency modes relative to SGD/Adam performance, this confirms predictions outlined in~\Cref{sec:ntk_exp_conv}: GN achieves uniform exponential decay across all frequencies (\Cref{lem:gn-error}) while 
    LM interpolates between SGD and GN behavior depending in the damping parameter $\mu$ (\Cref{lem:lm-error}).
    \item Higher-order methods' efficacy is limited to the NTK/lazy regime. 
    The methods have diminished performance when transitioning into the rich regime, where non-linear feature learning dominates and the linear curvature approximation in~\Cref{eq:linear_lazy_regime} does not apply.
\end{itemize}

To further understand the connection between optimizers and generalization, we investigate grokking behavior in the context of training and test loss across multiple tasks. The main observations from~\Cref{sec:exp_grokking} are:
\begin{itemize}
    \item Preconditioning greatly compresses the delay between memorization and generalization seen in grokking across a range of tasks, by providing uniform convergence through each mode in the NTK subspace. 
    \item This supports recent theories proposed by~\citep{kumar2024grokkingtransitionlazyrich,zhou2024rationale}, that overfitting or adaptivity are not the main reason for grokking. We propose that spectral bias plays an important role.
\end{itemize}

\subsection{Convergence Results}\label{sec:exp_conv_results}
To numerically realize the effects of~\Cref{lem:gn-error,lem:lm-error},
consider the regression problem of fitting a MLP to 
$
    u(x) = \frac{1}{3} \sum_{k=1}^3 k \sin((2k + 1)\pi x - k)
$  
on a uniform grid of 100 points drawn from $[0, 1]$. 
We employ MSE as our loss function for a NN consisting of two layers and a hidden dimension of 80.
The network is initialized using Glorot normal, and trained using SGD\footnote{Adam and other Adam-like optimizers can be interpreted as preconditioned gradient descent. Using them with GN or LM preconditioning can cause unexpected results as the application of two preconditioners is not well understood.} or a preconditioned variant with constant learning rate $\eta = 1\text{e-}2$. 
Of particular interest is the error in the frequency space
$e_i(t) := \frac{1}{n}\left|\mathrm{FFT}_i\left(u(x) - f(\theta(t), x)\right)\right|.$

The first ten modes are plotted in~\Cref{fig:fft-error}.
In particular, the slope of the error is in accordance with~\Cref{lem:gn-error,lem:lm-error} with GN's errors converging at a uniform, exponential rate for all frequencies, and with LM converging to GN as  $\mu \to 0$. 
There also appears a clear demarcation between the NTK convergence and rich regime, with preconditioning's effectiveness ending in the rich regime. 

\begin{figure}[ht]
    \centering
    \includegraphics[width=1\linewidth]{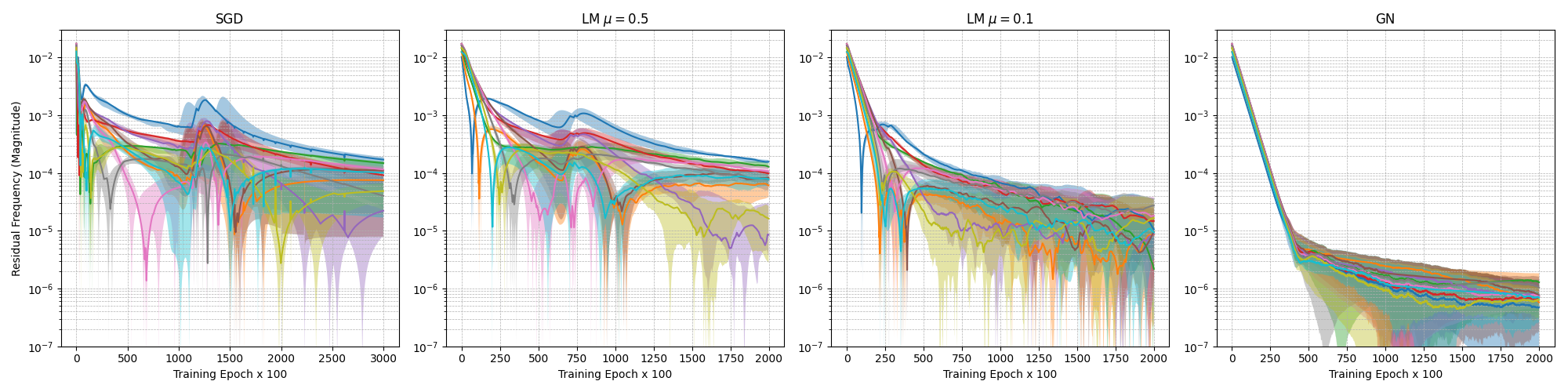}
    \caption{Mode-wise FFT error (first 10 frequencies) under SGD, LM ($\mu\in\{0.5,0.1\}$), and GN. Higher-order PGD attenuates spectral bias: GN yields near-uniform decay across modes; LM interpolates between SGD and GN.}
    \label{fig:fft-error}
\end{figure}

We next solve the Poisson equation on $[0, 1]^2$ with homogeneous Dirichlet boundary condition using PINNs~\citep{raissi2019physics} with a shallow network consisting of width 256 dimensions with various forcing functions corresponding to differing frequency solutions. 
We choose forcing functions such that the solutions are 
$u(x, y) = \sin(\pi n x) \sin(\pi m y)$
with $(n, m) \in \{(1, 1), (2, 2), (3, 3) \}$. 
In~\Cref{fig:pinns-loss}, we show the loss arising from using SGD, Adam and the LM optimizers with learning rates of $1\text{e-}3$, $1\text{e-}2$ and 
$1\text{e-}1$ respectively.

SGD and Adam initially show fast loss decay, likely due to fast elimination of dominant low-frequency error modes. In contrast, the LM training takes a more tempered path. 
The LM optimizer performs noticeably better compared to Adam as frequency increases, in particular, note that the slope with which the error decreases seems to be uniform with respect to different frequencies,  which suggests superior handling of higher frequency components as derived in~\Cref{lem:lm-error}.



\begin{figure}[ht]
    \centering
    \includegraphics[width=.8\linewidth]{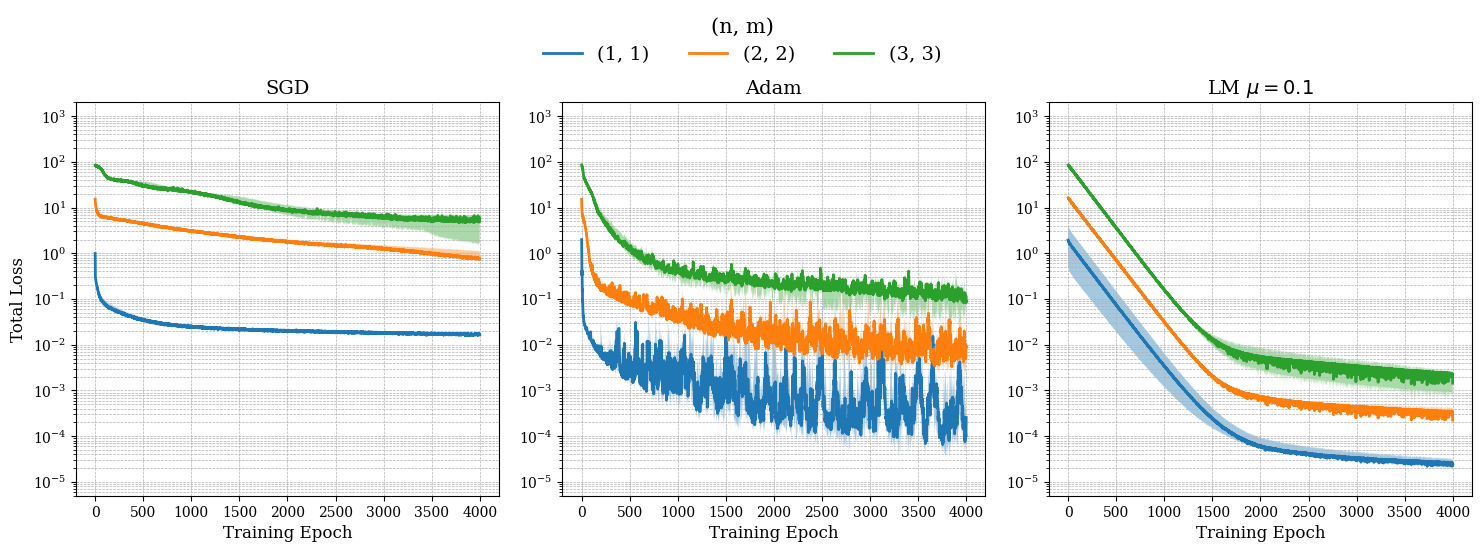}
    \caption{PINNs training loss SGD, Adam and LM dynamics with $\mu = 0.1$ for low (blue), medium (orange) and high (green) frequency forcing functions.}
    \label{fig:pinns-loss}
\end{figure}

\subsection{Grokking}\label{sec:exp_grokking}
We present additional evidence that grokking is due to the need to explore the NTK regime efficiently before generalizing, and that PGD reduces grokking. 
We repeat several grokking experiments in the literature, such as those introduced in~\citet{kumar2024grokkingtransitionlazyrich,liu2022omnigrok,zhou2024rationale}.

\begin{figure}[ht]
    \centering
    \includegraphics[width=0.65\linewidth]{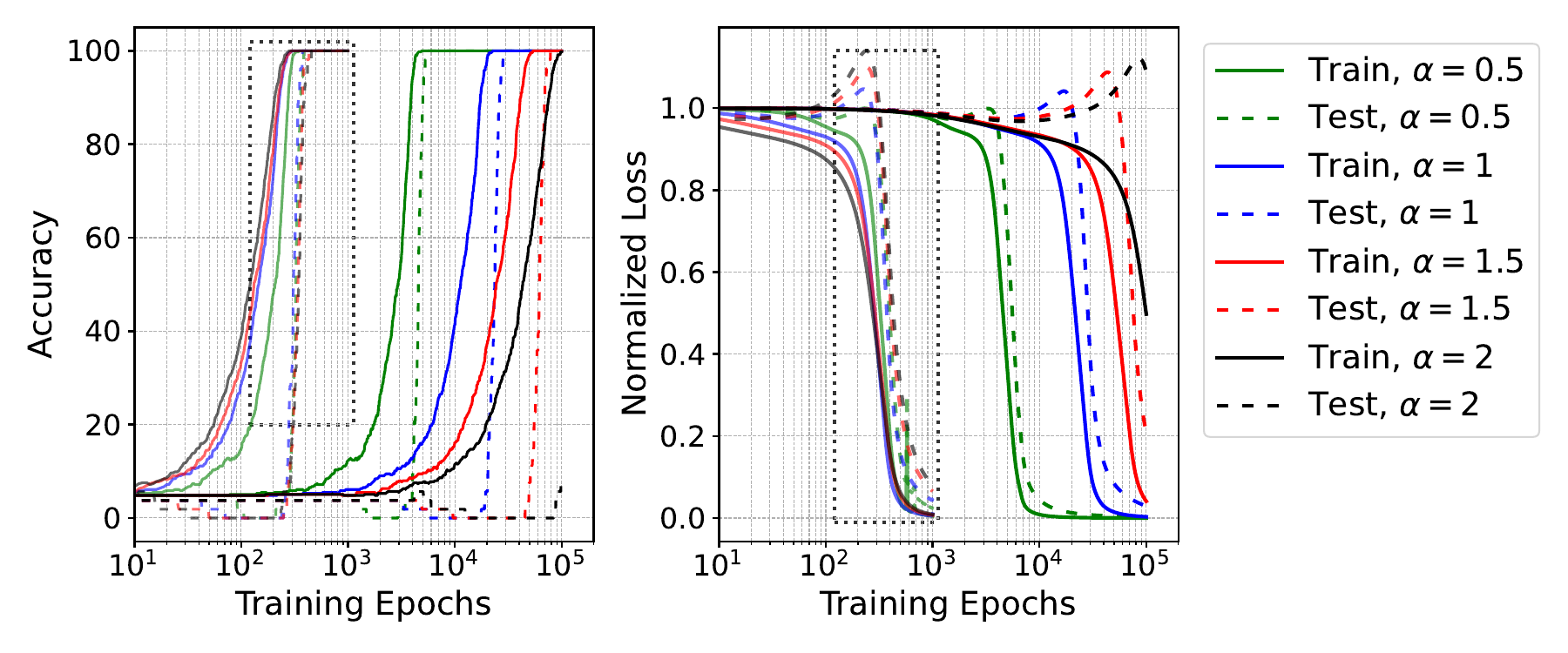}
    \caption{Accuracy of the modulo task trained using SGD and LM with similar initialization. 
    The LM dynamics is highlighted in the box. 
    Without preconditioning, grokking is observed as $\alpha$ becomes larger which is considerably alleviated by applying PGD.}
    \label{fig:grokking-modulo}
\end{figure}

We first consider a modular addition task, consisting of fitting a shallow MLP to the values for addition under the ring. 
Grokking is induced through the use of scaling the model output by $\alpha^2$, which has a larger NTK regime as $\alpha \to \infty$~\citep{chizat2019lazy}.
In~\Cref{fig:grokking-modulo}, we show the train/test accuracy and losses obtained using SGD and LM.
The boxed area indicates the the curves corresponding to the LM method. 
While it is not surprising that train loss decreases much faster when using higher-order methods, the fact that testing dynamics {are similar with respect to scaling} $\alpha$ suggest that the ability to explore the NTK at a uniform rate is highly valuable to accelerate generalization. 

\begin{figure}
    \centering
    \includegraphics[width=0.49\linewidth]{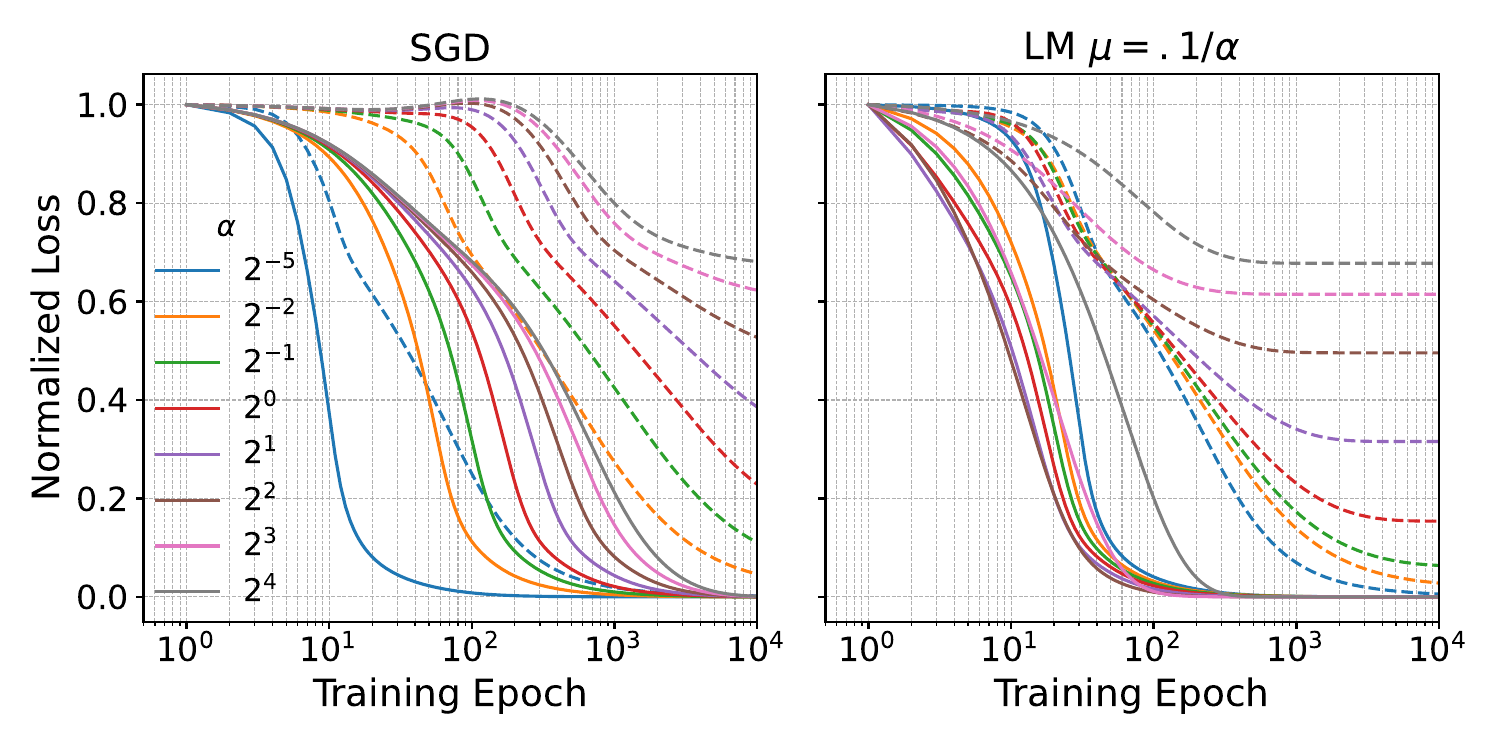}
    \includegraphics[width=0.49\linewidth]{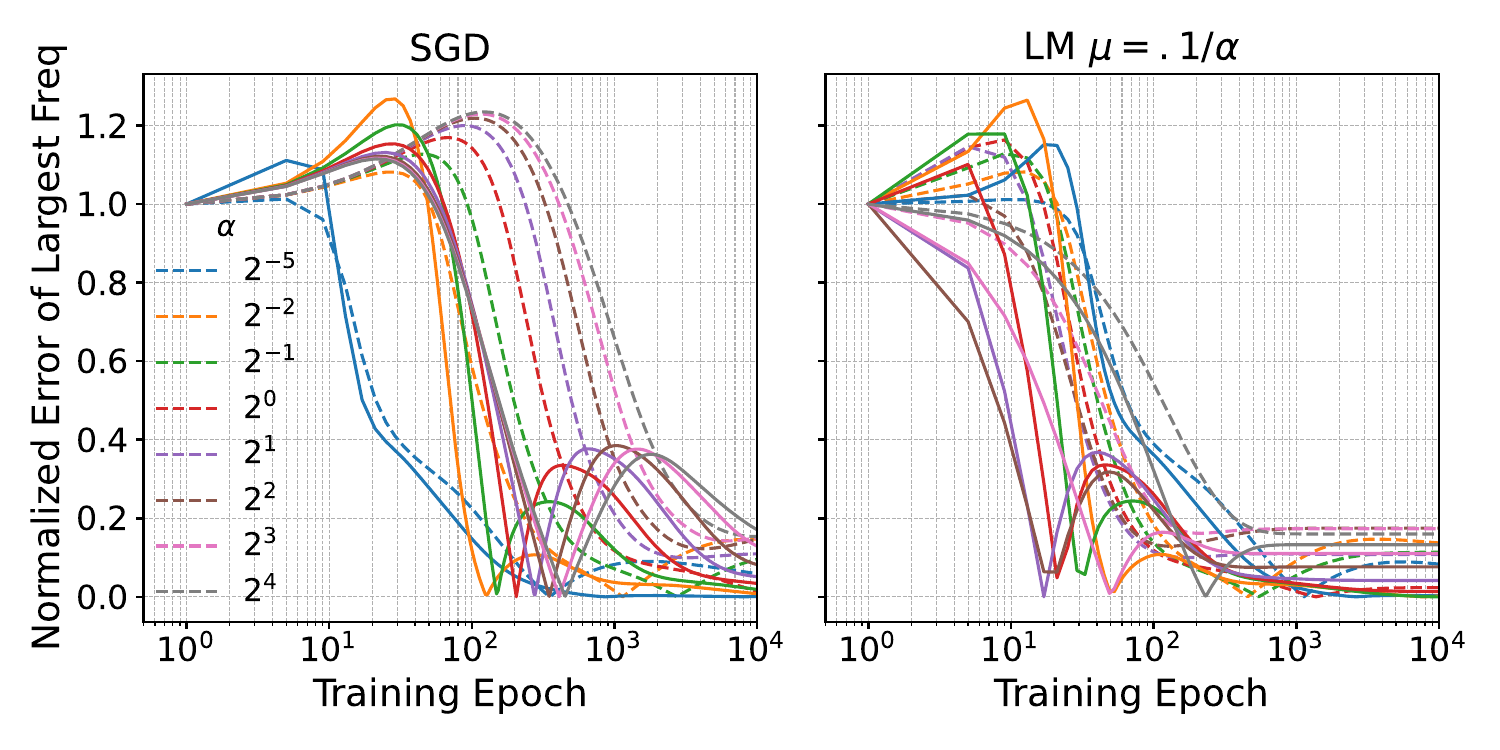}
    \caption{
    Polynomial regression grokking induced by output scaling $\alpha$.
    \textbf{Left two:} Train (solid) vs. test (dotted) loss. As $\alpha \to \infty$, SGD shows an increasing delay between train and test (grokking). LM explores the lazy NTK regime faster, reducing the delay across $\alpha$.   
    \textbf{Right two:} Error in the largest FFT frequency along 1D subspace. The solid lines indicate the subspace $\{cx_i \mid 0 \le c \le 1\}$ where $x_i$ is in the train set, and the dashed lines indicate the same subspace where $x_i = (1, \ldots, 1)$. SGD decreases the training and ``testing'' errors at different times while PGD greatly attenuates the disparity between train and test error. 
    }
    \label{fig:grokking-freq-error-plot}
\end{figure}

This can be more clearly seen through the high-dimensional polynomial regression task as presented in~\citep[\S 5]{kumar2024grokkingtransitionlazyrich}. 
Grokking is again induced via scaling the output of the shallow MLP. 
The left two plots of~\Cref{fig:grokking-freq-error-plot} again show that the simple usage of preconditioning greatly reduces the delay when $\alpha \to \infty$.
The gap between training loss and generalization is independent of $\alpha$ as the NTK regime is explored at a more uniform rate.

In the right plots of~\Cref{fig:grokking-freq-error-plot}, we show the error in the largest FFT frequency along two different 1D subspaces: the first subspace spanned by the one piece of training data, and the second along the vector $(1, \ldots, 1)$ ``test'' data. 
In the case of SGD, the error in the training subspace quickly converges while the testing subspace has error dependent on the scaling $\alpha$.
This suggests that spectral bias is causing slower convergence of those unseen modes.
However, in the case of PGD, the training/testing subspaces all converge at roughly the same time as \emph{all} modes converge at roughly the same rate. 

\begin{figure}[ht]
    \centering
    \includegraphics[width=0.85\linewidth]{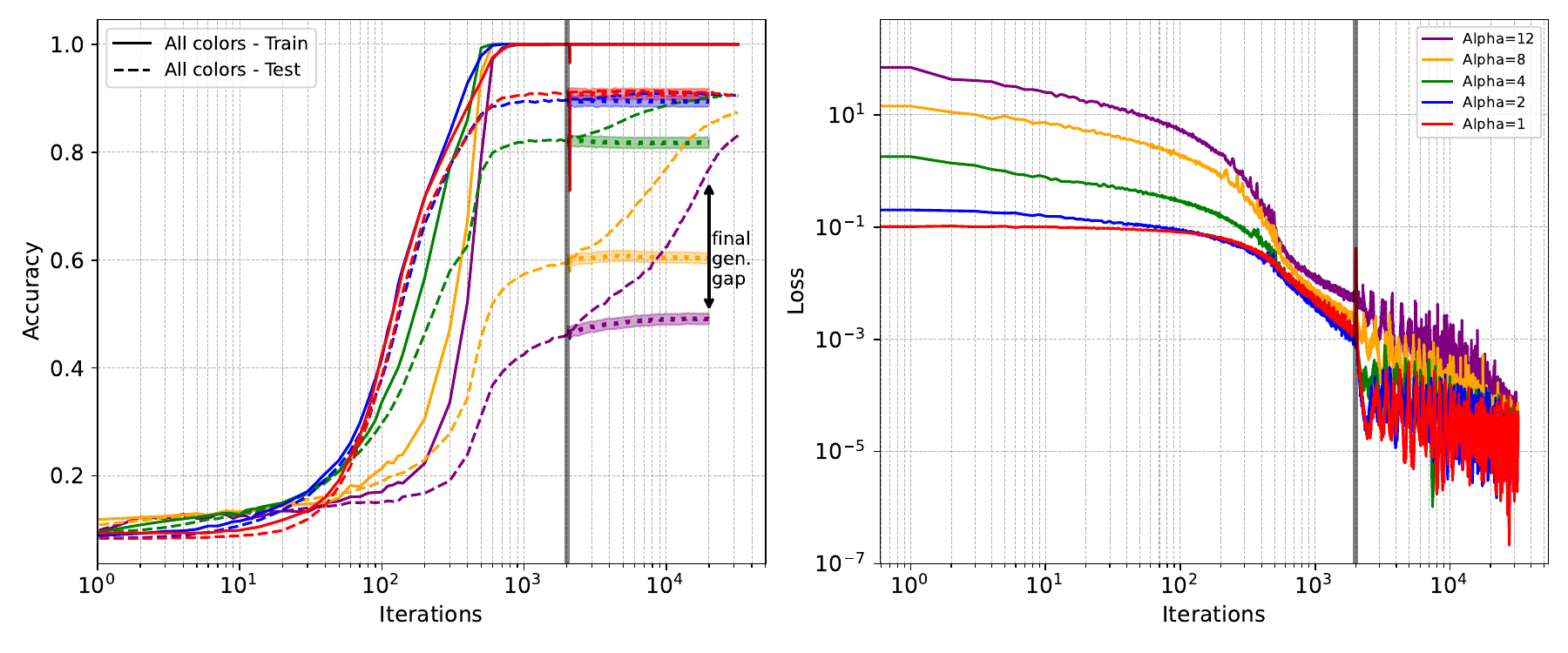}
    \caption{Levenberg-Marquardt reduces grokking but doesn't generalize alone.
    \textbf{Left}: Accuracy on MNIST using LM for the first 2000 iterations before switching to AdamW, demarcated by the vertical black bar.
    For reference, the shaded dotted lines denote the continuation of LM without AdamW. 
    Higher-order methods effectively reduces the delay but tend to remain near the lazy regime, which results in a final generalizability gap.
    However, applying AdamW after LM recovers the generalizability,
    leveraging the benefits of first-order methods in the final stages.
    \textbf{Right}: Corresponding loss. 
    }
    \label{fig:grokking-mnist-continue}
\end{figure}

We replicate the classical grokking experiment as presented in \citet{power2022grokking} which demonstrated grokking on the modular arithmetic task using a two-layer decoder transformer with vanilla Adam and cross-entropy loss.\footnote{Cross-entropy loss necessitates the use of generalized Gauss-Newton.}
In \Cref{fig:modular-transformer-plot}, we see that the introduction of PGD reduces \emph{when} generalization happens, but again, it seems full generalization is not achievable using just strict PGD. 
In fact, it can be seen on the loss values on the right that the validation loss seems to be static (and even slightly increasing) as more PGD iterations are used, further suggesting that using Hessian information can reduce the time which generalization happens, but has a tendency to greatly overfit. 
One can recover full generalization by switching to Adam again, which we show in \Cref{sec:modular-continue}.

\begin{figure}
    \centering
    \includegraphics[width=0.45\linewidth]{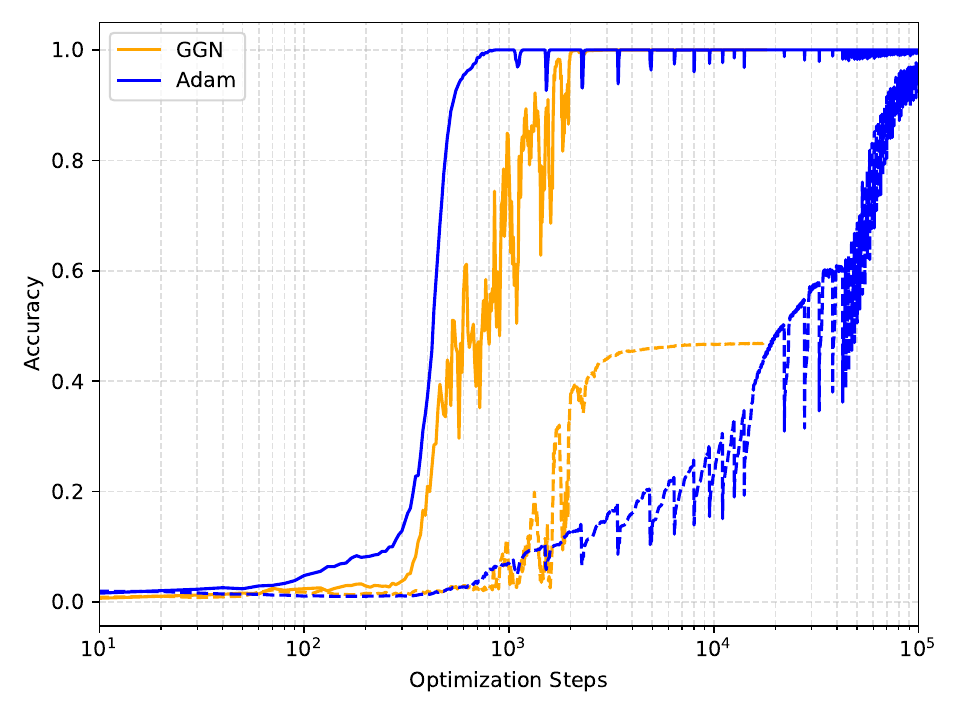}
    \includegraphics[width=0.45\linewidth]{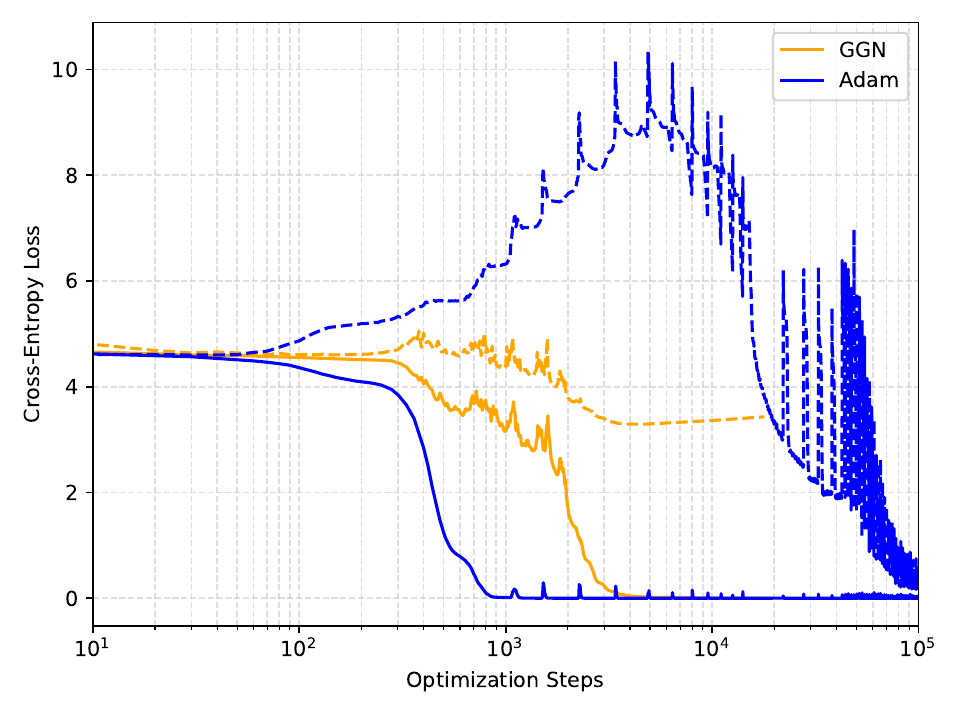}
    \caption{Classical modular addition example using a transformer. 
    Solid lines indicate train and dashed indicate validation. 
    \textbf{Left}: Using generalized Gauss-Newton results in a faster rise in validation but final validation stagnates severely ($45\%$ for GGN compared to $100\%$ for Adam). 
    \textbf{Right}: Loss values for GGN seems to be more tempered for the validation set, however, increasing iteration count of GGN doesn't seem to encourage generalization. 
    No weight decay is used for both optimizer. 
    }
    \label{fig:modular-transformer-plot}
\end{figure}

Finally, let us examine the grokking induced on MNIST by scaling initial weights by $\alpha$ as introduced in~\citep{liu2022omnigrok}; here $\alpha$ corresponds to the ``size'' of the NTK regime with $\alpha \to \infty$ corresponding to larger lazy regime~\citep{chizat2019lazy}.
We see on the middle and right plots of~\Cref{fig:mnist-weight} that the delay is again uniformly reduced by using PGD, however the final generalization is clearly far weaker. 
This is observed for the full MNIST dataset and other architectures in~\citep{wadia2021whitening,buffelli2024exact}. 
Fortunately, we are able to recover the exact (or better) testing data performance by using first-order methods \emph{after} the higher-order methods, which is seen in~\Cref{fig:grokking-mnist-continue}, where we use 2000 iterations of LM iterations before 20000 AdamW iterations. 
This again suggests that GN tend to stick near the NTK subspace rather than explore the rich regime, reinforcing those observations of~\citep{wadia2021whitening,buffelli2024exact}.



\section{Limitations and Conclusions}
We showed that PGD, in theory and empirically, reduces the effect of spectral bias in the NTK/lazy regime.
Using this lens, we reinforced the theories suggested by~\citet{kumar2024grokkingtransitionlazyrich,zhou2024rationale} that grokking arises as a result of the NNs' tendency to slowly explore the NTK first, by showing that PGD {uniformly} reduces the delay to generalization. 
Our approach does not address another likely contributor to grokking: train/test dataset sizes. 
This is also not a wholesale endorsement of the use of high-order methods:
while they accelerate entry into the rich regime, they often struggle to achieve high, final generalization.  
We recover strong generalization by switching to a first-order method (e.g., Adam) \emph{after} using high-order methods, thus suggesting training procedures that begin with PGD and then transition to first-order methods once the lazy/linear regime is exhausted. 
Further work, especially the study of convergence results in the rich regime~\citep{woodworth2020kernel}, is of particular interest.

\subsubsection*{Reproducibility Statement}
We have made every effort to ensure that the results in this paper are fully reproducible. All algorithmic details are provided in the main text, with additional derivations, proofs, and implementation specifics included in the appendix. Experimental settings, hyperparameters, and data processing steps are described in detail to enable faithful replication. 

Upon completion of the required internal information review, we will publicly release a complete codebase containing all scripts necessary to reproduce every experiment and figure in this paper at \url{https://github.com/sandialabs}. The repository will include instructions for reproducing results, generating figures, and replicating all reported metrics.

\subsubsection*{Acknowledgments and Disclosure of Funding}
This paper describes objective technical results and analysis. Any subjective views or opinions that might be expressed in the paper do not necessarily represent the views of the U.S. Department of Energy or the United States Government.

This article has been authored by an employee of National Technology \& Engineering Solutions of Sandia, LLC under Contract No. DE-NA0003525 with the U.S. Department of Energy (DOE). The employee owns all right, title and interest in and to the article and is solely responsible for its contents. The United States Government retains and the publisher, by accepting the article for publication, acknowledges that the United States Government retains a non-exclusive, paid-up, irrevocable, world-wide license to publish or reproduce the published form of this article or allow others to do so, for United States Government purposes. The DOE will provide public access to these results of federally sponsored research in accordance with the DOE Public Access Plan \url{https://www.energy.gov/downloads/doe-public-access-plan}. The work performed at Sandia National Laboratories was supported by the U.S. Department of Energy, Office of Science, Office of Advanced Scientific Computing Research, DyGenAI project, and the SEA-CROGS project in the MMICCs program. Additional support was received from Interlab 
 Laboratory Directed Research and Development program at Sandia.

This work was funded in part by the National Nuclear Security Administration Interlab Laboratory
Directed Research and Development program under project number 20250861ER. This paper de-
scribes objective technical results and analysis. Any subjective views or opinions that might be
expressed in the paper do not necessarily represent the views of the U.S. Department of Energy or
the United States Government. Sandia National Laboratories is a multimission laboratory managed
and operated by National Technology and Engineering Solutions of Sandia, LLC., a wholly owned
subsidiary of Honeywell International, Inc., for the U.S. Department of Energy’s National Nuclear
Security Administration under contract DE-NA-0003525. SAND2026-17906C. The research was
performed under the auspices of the National Nuclear Security Administration of the U.S. Department
of Energy at Los Alamos National Laboratory, managed by Triad National Security, LLC under
contract 89233218CNA000001. Los Alamos National Laboratory Report LA-UR-25-30571.

\bibliographystyle{plainnat}
\bibliography{references}


\appendix

\section{Proofs in \Cref{sec:ntk_exp_conv}}
We first present the proofs for the results in \Cref{sec:ntk_exp_conv}.
\begin{proof}[Proof of Lemma~\ref{lem:lm-error}]
    Consider the SVD $\bm J_t = \bm U_t \bm \Lambda^{1/2}_t \bm V^T_t$ where for notational purposes, we represent the singular values in their square root form $ \bm \Lambda^{1/2}_t$.
    Then by direct calculation 
    \begin{align*}
        \bm J_t(\mu \bm I + \bm J_t \bm J^T_t)^{-1} \bm J^T_t &= \bm U_t \bm \Lambda_t^{1/2} \bm V_t^T( \mu \bm V_t \bm V_t^T + \bm V_t \bm \Lambda_t \bm V_t^T)^{-1} \bm V_t \bm \Lambda_t^{1/2} \bm U_t^T \\
        &= \bm U_t \bm \Lambda_t^{1/2} (\mu \bm I +\bm \Lambda_t)^{-1} \bm \Lambda_t^{1/2} \bm U_t^T \\
        &= \bm U_t \bm{\tilde \Lambda_t} \bm U_t^T 
    \end{align*}
    where $\bm{\tilde \Lambda}_t = \text{diag}\left(\frac{\lambda_i(\bm e)}{\mu + \lambda_i(\bm e)} \right)$.
\end{proof}
\begin{proof}[Proof of Lemma~\ref{lem:gn-error}]
    Using the same SVD as the above proof, let $(\bm J^T_t \bm J_T)^\dagger = \bm V_t \bm{\tilde \Lambda_t}^{-1} \bm V_t^T$ be the pseudo-inverse where the diagonal matrix is defined entree-wise
    \[\bm{\tilde \Lambda_t}^{-1}= \left\{ \lambda_i'(\bm e) \mid \lambda_i'(\bm e) = 
\begin{cases} 
0 & \text{if } \lambda_i(\bm e) < \varepsilon \\ 
\frac{1}{\lambda_i(\bm e)} & \text{otherwise} 
\end{cases}, \, \lambda_i(\bm e) \in \bm \Lambda_t \right\}
\]
    with $\varepsilon$ some user-chosen truncation parameter. 
    Thus, 
    \begin{align*}
        \bm J_t(\bm J_t^T \bm J_t)^{\dagger} \bm J_t^T &= \bm U_t  \bm{\tilde \Lambda}_t^{-1} \bm \Lambda\bm U_t^T
    \end{align*}
    where now the singular values are either 1s or 0s depending on the magnitude relative to $\varepsilon$. 
\end{proof}


\section{Computational Details}\label{sec:comp-details}
Since in the case of the pure Gauss-Newton steps, the pseudo-inverse is hard to compute and we only perform it for the simplest models and batch sizes. 
In such cases, the Jacobian matrices are constructed and the SVD is taken as described above. 
There are recent work which focuses on Gauss-Newton using inexact or randomized methods \citep{cartis2022randomised, bellavia2025inexact}, but we choose to focus more on the Levenberg-Marquardt formulations as we found it, not only to be easier to compute, but also more stable in its training dynamics. 

Computing the inverses for Levenberg-Marquardt $(\mu\bm I + \bm J_t^T \bm J_t)^{-1}$ naively is memory intensive for even moderately-sized batch sizes and models. 
There are several ways to skirt around this issue, including using conjugate gradient \citep{gargiani2020cg}, the Sherman-Morrison-Woodbury (SMW) identity \citep{ren2019woodbury}, the Duncan-Guttman identity \citep{korbit2024exact}, or, departing from the PGDs discussed earlier, techniques like Kronecker products in K-FAC \citep{martens2015optimizing}.

In general, we opted to use a SMW approach, meaning that one needs to compute 
\begin{align}\label{eqn:smw}
(\mu \bm{I} + \bm{J}_t^T \bm{J}_t)^{-1} = \frac{1}{\mu} \bm{I} - \frac{1}{\mu^2} \bm{J}_t^T \left(\bm{I} + \frac{1}{\mu} \bm{J}_t \bm{J}_t^T \right)^{-1} \bm{J}_t.
\end{align}
In particular, note that $\bm J_t \bm J^T$ is $n \times n$; in general, the batch size $n$ are far smaller than the number of parameters $p$ meaning the matrix inverse scales far better. 

Furthermore, modern ML frameworks allow one to reuse the computational graphs constructed after doing backpropagation. 
In particular, the products against $\bm J_t$ and $\bm J^T_t$ in \Cref{eqn:smw} can be done in a matrix-free manner; for example using \texttt{vjp, jvp} from Jax.
For simplicity, we opted to construct the inner matrix-inverse explicitly, but one in theory could apply matrix-free conjugate gradient to avoid that too. 

In the case of cross-entropy loss, the so called generalized Gauss-Newton (GGN) need to be used. 
In particular, the preconditioner is now $\bm J^T_t \bm H_t \bm J_t$ where $\bm H$ is the second derivative of the loss with the usual modifications for the Levenberg-Marquardt stabilization of adding a scaled identity. 
We refer the reader to \citet{botev2017practical} for more details regarding GGN. 

\subsection{2D Regression and regression hyperparameters}
The results of \Cref{sec:exp_conv_results} holds true even in functions with slowly decaying modes, such as a 2D function with discontinuities.
Using the same network as in the 1D case, we fit it to the following discontinuous function on $[0,1]^2$
\begin{align}\label{eqn:2d-discont}
    u(x, y) = 
\begin{cases} 
1.0 & \text{if } x + y < 0.5 \\ 
2.0 & \text{if } x + y > 1.5 \\ 
0.75 & \text{if } x < 0.5 \text{ and } y \geq 0.5 \\ 
0.0 & \text{otherwise} 
\end{cases}.
\end{align}
A total of 1600 points were sampled. 
Again, to verify the lemmas from above, we show the residual in the function's frequency space, which we plot in \Cref{fig:fft-error-2d} only for the case of SGD and GN.
As predicted, the errors decay in a uniform manner for the Gauss-Newton method, implying an exponential convergence for all modes to the ``end'' of the lazy regime. 

The details for the 1D and 2D regression are the same, and is detailed in \Cref{tab:regression_hyperparameters}.

\begin{figure}
    \centering
    \includegraphics[width=.6\linewidth]{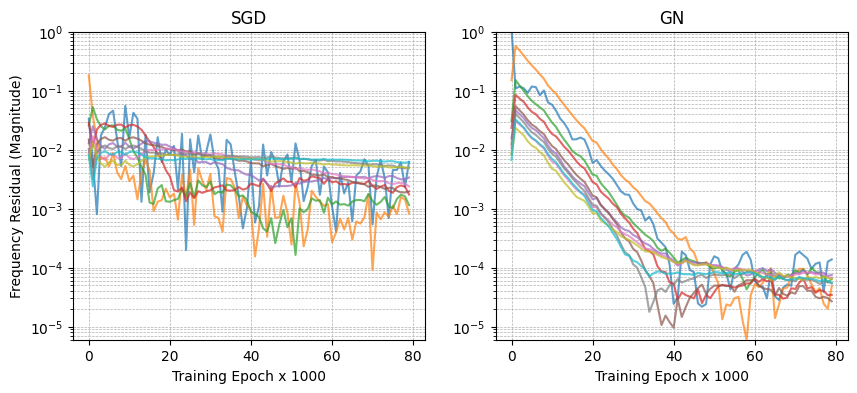}
    \caption{Plot of the residuals in the first ten frequencies of the FFT of residuals for SGD, and preconditioning with GN on a problem with slowly decaying Fourier modes for fitting \Cref{eqn:2d-discont}.}
    \label{fig:fft-error-2d}
\end{figure}

\begin{table}[h!]
\centering
\caption{Hyperparameters for regression problems}
\label{tab:regression_hyperparameters}
\begin{tabular}{|l|l|}
\hline
\textbf{Parameter}                  & \textbf{Value}                      \\ \hline
Number of Layers                    & 2                                   \\ \hline
Hidden Dimension                    & 80                                  \\ \hline
Kernel Initialization               & Kaiming Uniform                     \\ \hline
Bias Initialization                 & Zeros                               \\ \hline
Activation Function                 & \texttt{tanh}                       \\ \hline
Output Dimension                    & 1                                   \\ \hline
Learning Rate                       & $1 \times 10^{-2}$                  \\ \hline
Batch Size (1D)                        & 100 (full batch)                            \\ \hline
Batch Size (2D)                        & 400                                 \\ \hline
\end{tabular}
\end{table}

\subsection{PINNs hyperparameters}
The usage of PGD in PINNs is not new, and one can obtain great performance using GN/LM methods methods compared to Adam and BFGS on PINNs problem by using a dynamic line search for the learning rate \citep{jnini2024gauss,muller2023achieving}.
We chose LM with fixed learning rate (rather than dynamic line-search) to  better emulate continuous time dynamics.
We found experimentally that Gauss-Newton, without regularization can be unstable for a fixed learning rate. 

The details for the PINNs example is shown in \Cref{tab:pinns_hyperparameters}.

\begin{table}[h!]
\centering
\caption{Hyperparameters for PINNs}
\label{tab:pinns_hyperparameters}
\begin{tabular}{|l|l|}
\hline
\textbf{Parameter}                  & \textbf{Value}                      \\ \hline
Number of Hidden Layers                    & 1                                   \\ \hline
Hidden Dimension                    & 256                                  \\ \hline
Kernel Initialization               & Kaiming Uniform                     \\ \hline
Bias Initialization                 & Zeros                               \\ \hline
Activation Function                 & \texttt{tanh}                       \\ \hline
Learning Rate                       & $1 \times 10^{-3}$, $1 \times 10^{-2}$, $1 \times 10^{-1}$                  \\ \hline
Total interior points & 640 \\ \hline
Total boundary points & 40 \\ \hline 
Batch Size                        & 200                                 \\ \hline
\end{tabular}
\end{table}

\subsection{Grokking hyperparameters}
\subsubsection{Modular Arithmetic}
We follow the experiment as used in \citep{chizat2019lazy}; in particular, the shallow MLP uses a quadratic activation, and the outputs are scaled with the scale and divided by the input dimension and hidden dimension. 
The remaining details are shown in \Cref{tab:grokking_modulo_hyperparameters}.

\begin{table}[h!]
\centering
\caption{Hyperparameters for Modular Arithmetic Model}
\label{tab:grokking_modulo_hyperparameters}
\begin{tabular}{|l|l|}
\hline
\textbf{Hyperparameter} & \textbf{Value} \\\hline
Modulo parameter & 23  \\\hline
Input dimension & 46  \\\hline
Train Data Fraction & 0.9 \\\hline
Hidden Dimension & 100 \\\hline
Epochs & 1000 \\\hline
Scale $s$ & $\{0.5, 1.0, 1.5, 2.0\}$\\\hline
Learning Rate & $10^{-2}/s^2$ \\\hline
Batch size & Full batch \\\hline
LM Regularization Parameter & $\{0.07, 0.0125, 0.005, 0.0025\}$ \\ \hline
\end{tabular}
\end{table}

We also considered the modular addition task and model exactly as described in \citet{power2022grokking} and implemented with standard PyTorch modules with no further changes.
However, we had to implement three additional changes compared to the other examples due to the model size and complexity of transformers. 
The first, as discussed in the main text, is the change from vanilla Gauss-Newton to Generalized Gauss-Newton due to the cross-entropy loss. 
The slightly larger model size of transformers ($4 \times 10^5$ parameters) also mean we had to use conjugate gradient to solve 
\begin{align*}
    (\mu \bm I + \bm J^T_t \bm H_t \bm J_t) \vec h = \vec g
\end{align*}
rather than the SMW approach from above. 
Finally, we also use an Armijo line search as we found without the line search, the losses and accuracies tended to oscillate more. 
As such, we use a large learning rate. 
The damping schedule is a smooth decay schedule over 200 iterations to $10^{-1}$ starting at 1:
$$
\lambda(i) = 
\begin{cases} 
10^{-1} & \text{if } i \ge 200 \\
\exp\left(\ln(10^{0}) + \frac{i}{200} \cdot (\ln(10^{-1}) - \ln(10^{0}))\right) & \text{otherwise}
\end{cases}.
$$
The remaining hyperparameters are detailed in \Cref{tab:grokking_transformer_hyperparameters}.

\begin{table}[h!]
    \caption{Hyperparameters for Transformer Modular Addition}
    \label{tab:grokking_transformer_hyperparameters}
    \centering
\begin{tabular}{|l|l|}
\hline
Hidden Dimensions  & 128 \\\hline
Layers & 2 \\\hline
Head & 4 \\\hline
Modulo Parameter $p$ & 97 \\\hline
Training Percentage of Dataset & $50\%$ \\\hline
Learning Rate (Adam) & $10^{-3}$ \\\hline
Learning Rate (LM) & $1$ \\\hline
Weight Decay (All) & $0$ \\\hline
Batch Size & 512 \\\hline
Conjugate Gradient & max iters 150, residual threshold $10^{-6}$ \\ \hline
Line Search  & $c=10^{-4}$, $\tau = 0.5$, max iters 10 \\ \hline
\end{tabular}
\end{table}

\subsubsection{Polynomial Regression}
The main implementations are based \citep[\S 5]{kumar2024grokkingtransitionlazyrich} which we refer the reader to for model definition, and exact data generation formulation. 
The remaining hyperparameters are shown in \Cref{tab:grokking_polynomial_hyperparameters}.

\begin{table}[h!]
    \caption{Polynomial Regression Parameters}
    \label{tab:grokking_polynomial_hyperparameters}
    \centering
\begin{tabular}{|l|l|}
\hline
Input Dimension  & 100 \\\hline
Hidden Layer Neurons & 500 \\\hline
Scaling $\epsilon$ & 0.25 \\\hline
Training Data Points & 450 \\\hline
Test Data Points & 1000 \\\hline
Learning Rate & $0.5 \times N$ \\\hline
Training Iterations & 60000 \\\hline
Batch size & Full batch \\\hline
LM Regularization Parameter  & $0.1 / \alpha$ \\ \hline
\end{tabular}
\end{table}

\subsubsection{MNIST Classification}
When considering the lack of generalization when using PGD in the MNIST task (\Cref{fig:mnist-weight} vs \Cref{fig:grokking-mnist-continue}), a natural question is what is the loss value? 
In \Cref{fig:mnist-loss}, we show the loss plots from a pure PGD run versus an AdamW run. 
Interestingly, the loss values are generally lower for PGD and is clearly decreasing even as the classification error fails to noticeably change. 
This is indicative of the draw backs of $\ell_2$ loss regression in a classification problem \citep{van2016pixel,CS231nNotes}.

Our implementation is based on \citep{liu2022omnigrok}. 
For the regularization parameter in LM, we opted to use a dynamically scaling. 
We found that the best performances is achieved when the regularization is small, however, if using a constant learning rate, rather than an adaptive line search for example, this can be unstable. 
We hence use a logarithmic interpolation to ease the parameter down over 500 iterations using the following
$$
\lambda(i) = 
\begin{cases} 
10^{-4} & \text{if } i \ge N \\
\exp\left(\ln(10^{-2}) + \frac{i}{500} \cdot (\ln(10^{-4}) - \ln(10^{-2}))\right) & \text{otherwise}
\end{cases}
$$
which ensures a smooth decay.
When we perform the switching from preconditioned training to AdamW, we use the same learning rate as if training using AdamW for the entire time. 
The remaining details are shown in \Cref{tab:mnist-grokking-hyperparameters}.

\begin{table}[h]
\centering
\caption{Hyperparameters for the MNIST grokking experiment.}
\label{tab:mnist-grokking-hyperparameters}
\begin{tabular}{|l|l|}
\hline
Hidden Layer Width & 250 \\ \hline
Layers & 2 \\ \hline
Activation Function & Relu \\ \hline
Kernel Initializer & Glorot \\\hline
Training Points & 1000 \\ \hline
Batch Size & 200 \\ \hline
Learning Rate (SGD, LM) & $10^{-4}$ \\ \hline
Learning Rate (Adam/AdamW) & $10^{-3}$ \\ \hline
Weight Decay (for AdamW) & 0.1 \\ \hline
\end{tabular}
\end{table}

\begin{table}[h]
\centering
\caption{Hyperparameters for the MNIST grokking experiment with cross-entropy.
The changes in width, layers, weight scaling, and training points are all obtained from \citet{liu2022omnigrok}.}
\label{tab:mnist-grokking-hyperparameters-xentropy}
\begin{tabular}{|l|l|}
\hline
Hidden Layer Width & \textbf{200} \\ \hline
Layers & \textbf{3} \\ \hline
Activation Function & Relu \\ \hline
Kernel Initializer & Glorot \\\hline
Initial weight scaling $\alpha$ & \textbf{100} \\ \hline 
Training Points & \textbf{200} \\ \hline
Batch Size & 200 \\ \hline
Learning Rate (SGD/Adam) & $10^{-3}$ \\ \hline
Learning Rate (LM) & $10^{-2}$ \\ \hline
Damping parameter for LM ($\lambda(i)$) & $1$ \\ \hline
Weight Decay (for all) & 0.01 \\ \hline
\end{tabular}
\end{table}

\begin{figure}
    \centering
    \includegraphics[width=0.8\linewidth]{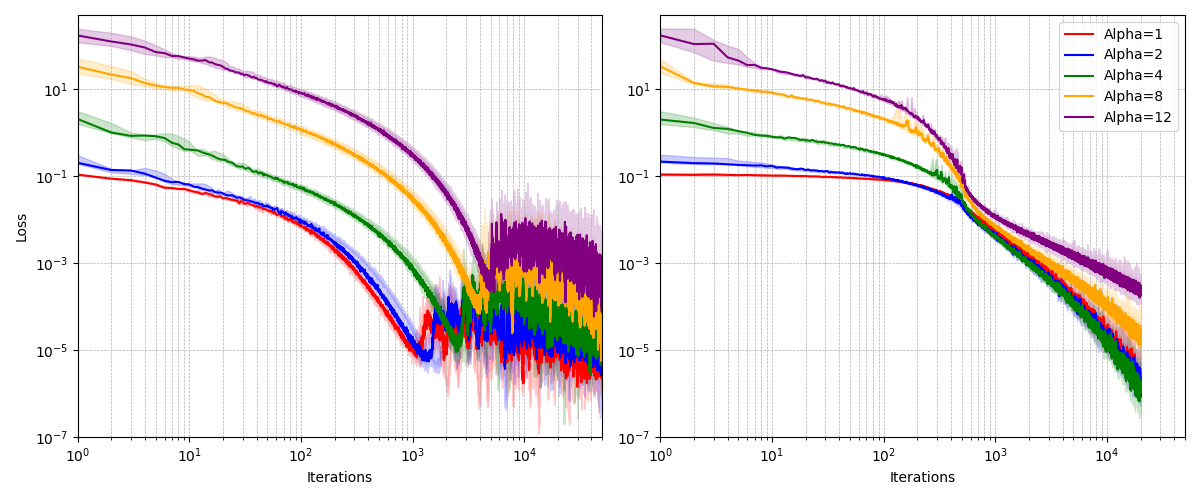}
    \caption{Plot of loss resulting from MNIST data; AdamW on the left and PGD on the right.
    The loss values attained by PGD is lower than AdamW, however the classification error doesn't seem to reflect this. }
    \label{fig:mnist-loss}
\end{figure}



\section{MNIST with Cross-Entropy}\label{sec:xentropy-mnist}
Typically, MNIST classification is performed using cross-entropy loss, however our experiment above uses MSE loss. 
The motivation for our choice of MSE is to replicate the experiments presented in \citet{liu2022towards,liu2022omnigrok}, where grokking on the MNIST dataset was first introduced. 
In Appendix E of the latter paper \citet{liu2022omnigrok}, the authors discussed how the use of cross entropy loss also allows for grokking, albeit in a more limited fashion. 
However, we found that the grokking induced using cross entropy on the MNIST dataset required a small dataset (only 200 total samples are used) yielding a training loss is equal to zero. Meaning that generalization is strictly achieved through weight norm decrease.\footnote{The fact that grokking is entirely due to weight norm changes doesn't refute our above observation that weight norm alone is not necessary in all cases.}

\begin{figure}[t]
    \centering
    \begin{subfigure}[t]{0.42\textwidth}
        \centering
        \includegraphics[width=\linewidth]{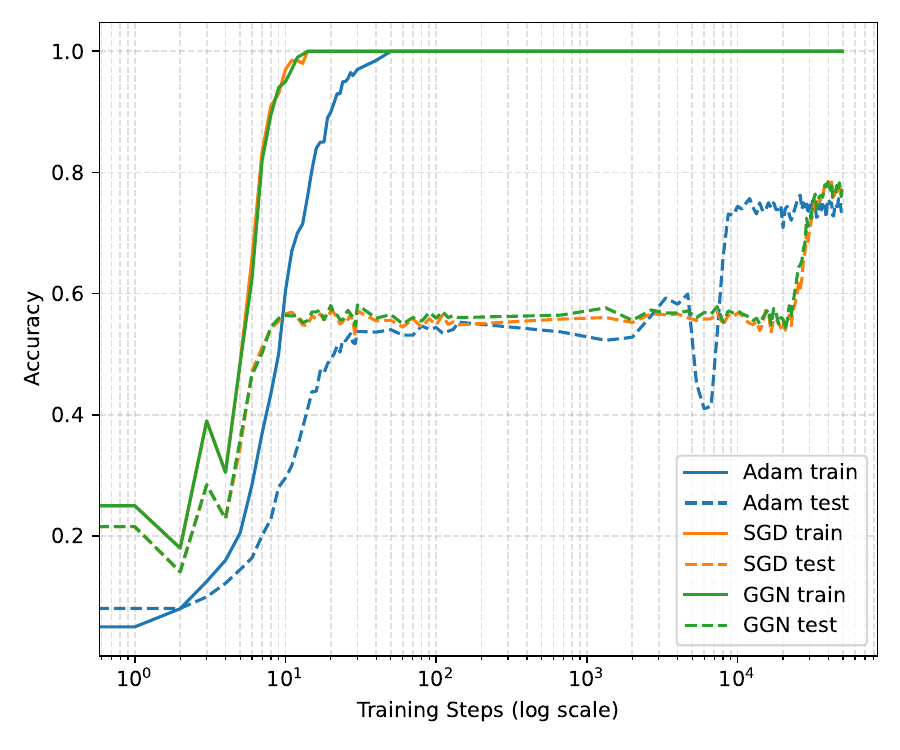}
        \caption{Accuracy}
        \label{fig:mnist-xentropy-acc}
    \end{subfigure}
    \hfill
    \begin{subfigure}[t]{0.42\textwidth}
        \centering
        \includegraphics[width=\linewidth]{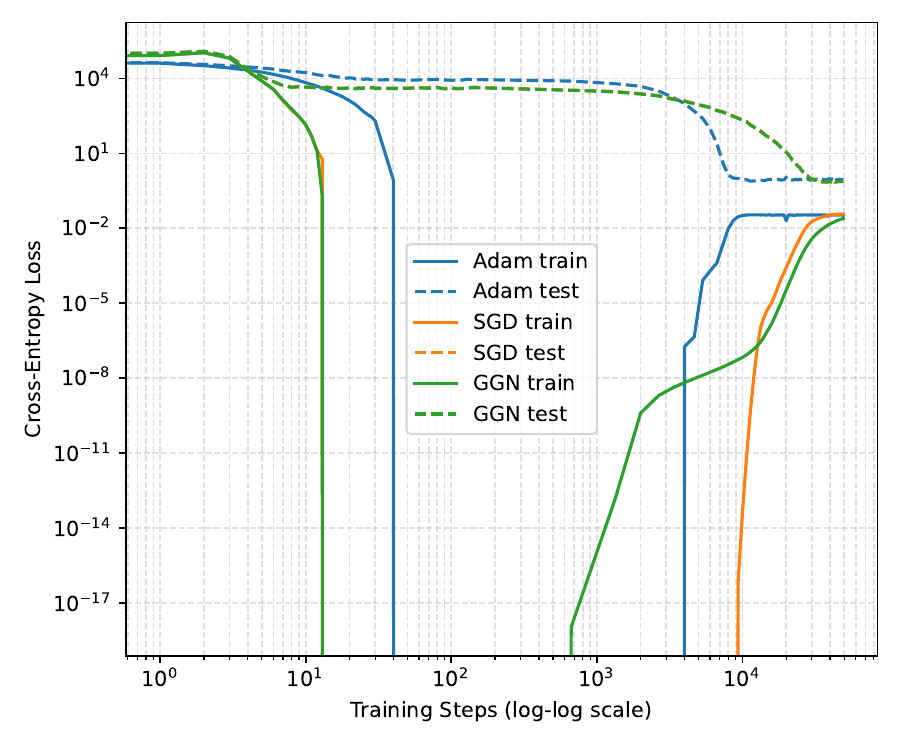}
        \caption{Loss}
        \label{fig:mnist-xentropy-losses}
    \end{subfigure}

    \begin{subfigure}[t]{0.42\textwidth}
        \centering
        \includegraphics[width=\linewidth]{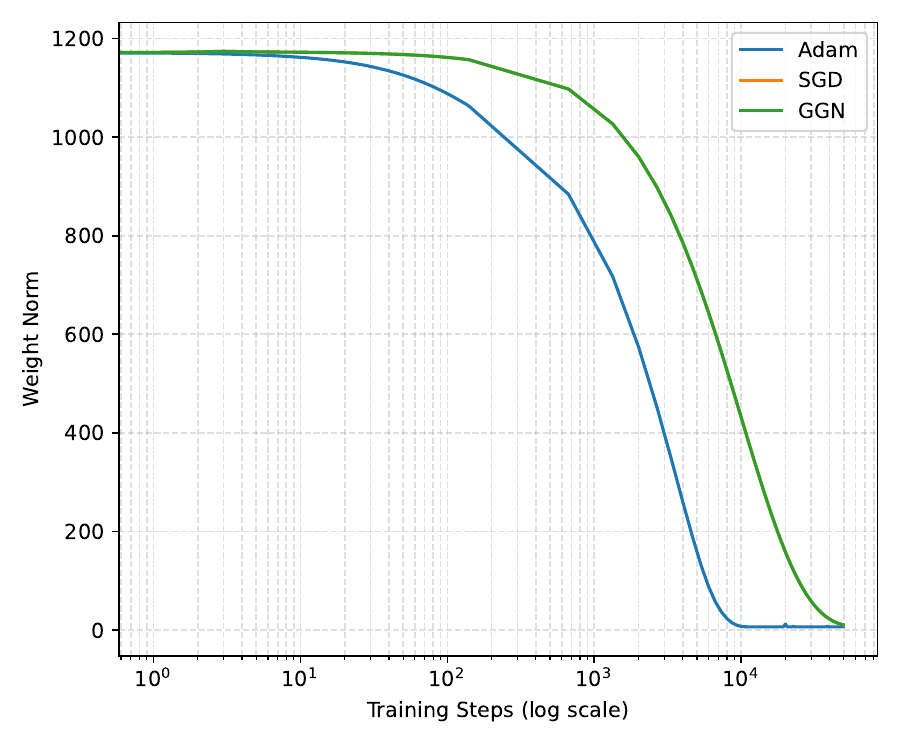}
        \caption{Weight norms}
        \label{fig:mnist-xentropy-weights}
    \end{subfigure}

    \caption{Grokking behavior on MNIST using cross-entropy under different optimizers.
    Grokking can only be induced artificially by using very small training sets, leading to training losses of zero, meaning an optimizer will not impact performance for the grokking portion.
    Top: accuracy and loss trajectories. Bottom: weight norm decay driving generalization.}
    \label{fig:mnist-xentropy}
\end{figure}
In \Cref{fig:mnist-xentropy}, we show data from the MNIST grokking using cross-entropy as discussed in \citet{liu2022omnigrok}; full hyperparameters are detailed in \Cref{tab:mnist-grokking-hyperparameters-xentropy}. 
The accuracies in \Cref{fig:mnist-xentropy-acc} all plateaus around 55\% before a subsequent jump to 75\%, which the authors of \citet{liu2022omnigrok} argue is still grokking.
The mechanism of the generalization, as seen on in \Cref{fig:mnist-xentropy-losses,fig:mnist-xentropy-weights} is clearly due to to the weights decaying to zero, forcing the network to generalize \emph{as the losses are machine zero}. 
In such case, the use of different optimizers will have similar performance as the only changes in gradients come from the weight decay implementation. 
We observe that SGD and GGN behave essentially like Adam, with generalization arising solely from weight norm decay.
We were not able to replicate this behavior as we increase the dataset size.
This suggests that, at least for the MNIST dataset, MSE does ``reveal'' more grokking. 
Nevertheless, we show in the modular arithmetic example, that Gauss-Newton seems to reduce the generalization point even for transformers. 

\section{Continuation for Transformer Modular Arithmetic}\label{sec:modular-continue}
We observed in \Cref{fig:modular-transformer-plot} that PGD shortens the gap between training accuracy reaching 100\% and validation accuracy rising. 
However, Gauss–Newton optimization substantially reduces the achievable validation accuracy (roughly 45\%, compared to nearly 100\% for Adam). 
In the MNIST setting (\Cref{fig:grokking-mnist-continue}), we were able to continue training from the PGD-found minimum and recover strong generalization simply by switching to AdamW.

In the transformer modular-arithmetic task, however, the minimum identified by PGD appears to lie ``far'' from a generalizing solution. 
As shown in \Cref{fig:modular-transformer-continue}, performing the same continuation strategy-switching from PGD to Adam (without weight decay, to match the original setup) requires a comparable amount iterations before achieving full generalization as usual Adam, and the validation loss spikes dramatically in the process. 
This suggests that, at least in this modular example, the Gauss–Newton minimum is not easily continued into a generalizing one. 
We emphasize that our results are not intended as an endorsement for fully adopting pseudo–second-order methods such as GGN, but rather as an investigation into their optimization dynamics.

\begin{figure}
    \centering
    \includegraphics[width=0.48\linewidth]{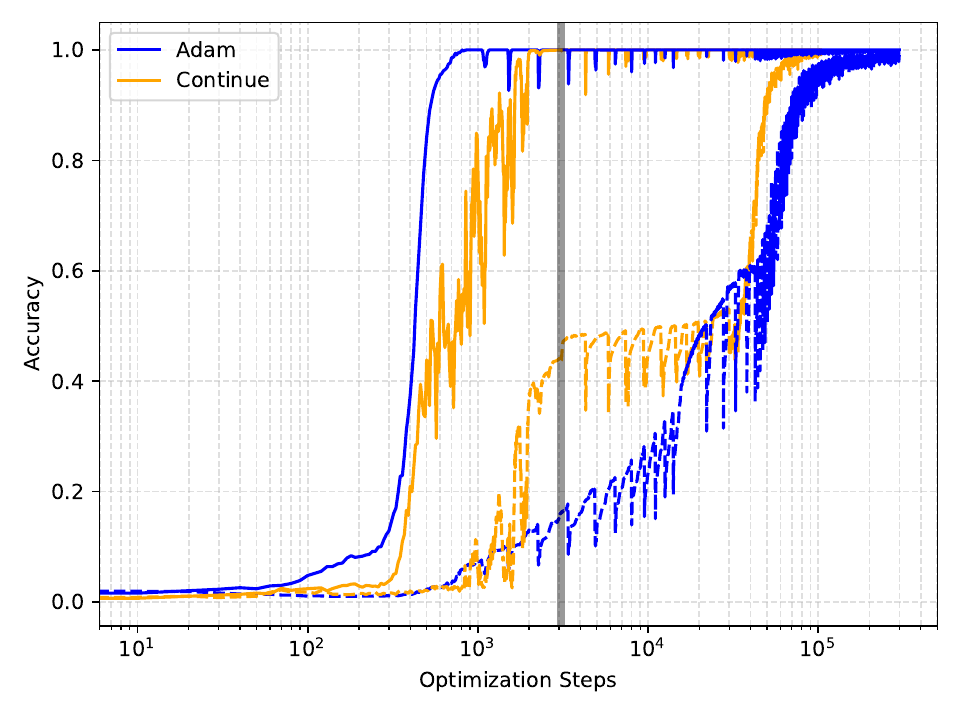}
    \includegraphics[width=0.48\linewidth]{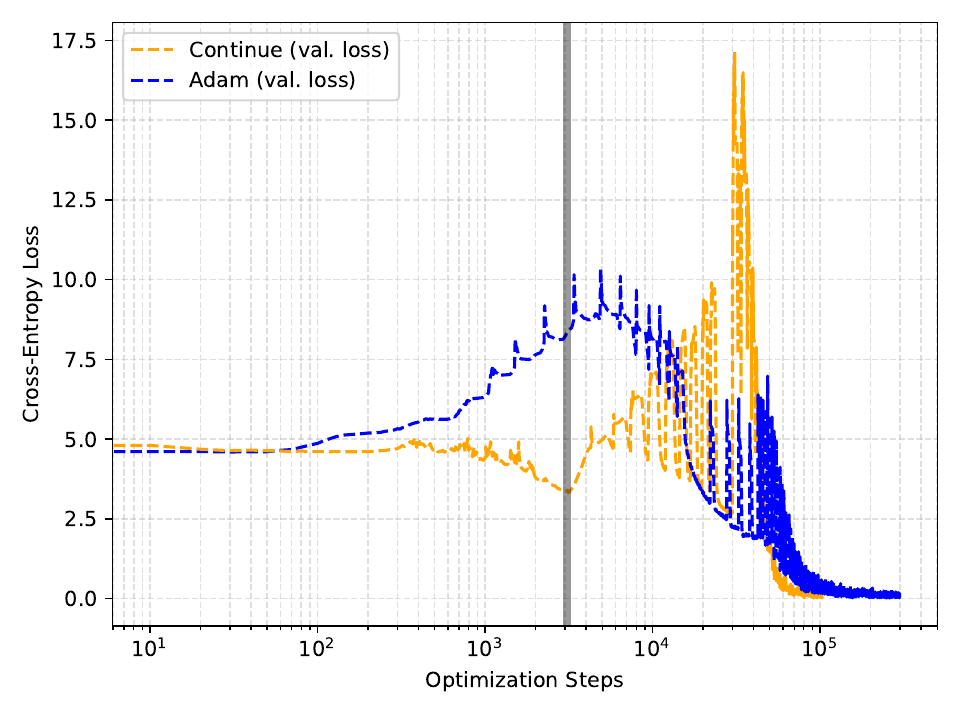}
    \caption{Accuracy and loss from the modular addition using transformer task where ``Continue'' indicates using GGN for the first 3000 gradient steps, and switching to Adam. 
    For reference, the pure Adam is also plotted. 
    The black bar indicates where the switching takes place. 
    }
    \label{fig:modular-transformer-continue}
\end{figure}

\section{Additional Seeds}
In \Cref{fig:seeds-mnist-accuracy}, we show the same plots from \Cref{fig:mnist-weight,fig:grokking-mnist-continue} replicated with 5 more seeds. 
The areas between the min/max are shaded, and the median lines are plotted for each quantity. 
Note that variation is minimal, indicating robustness across this and other experiments. 
\begin{figure}
    \centering
    \includegraphics[width=0.75\linewidth]{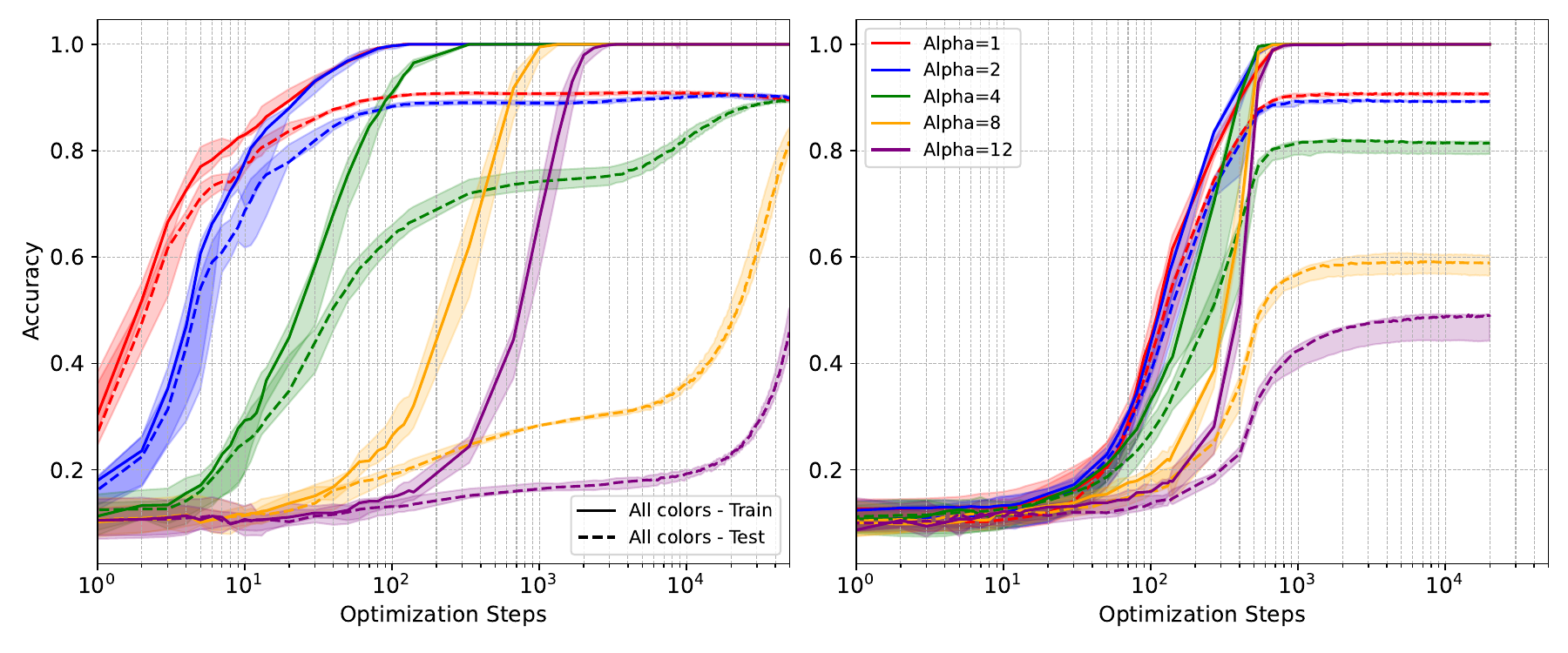}
        \includegraphics[width=0.75\linewidth]{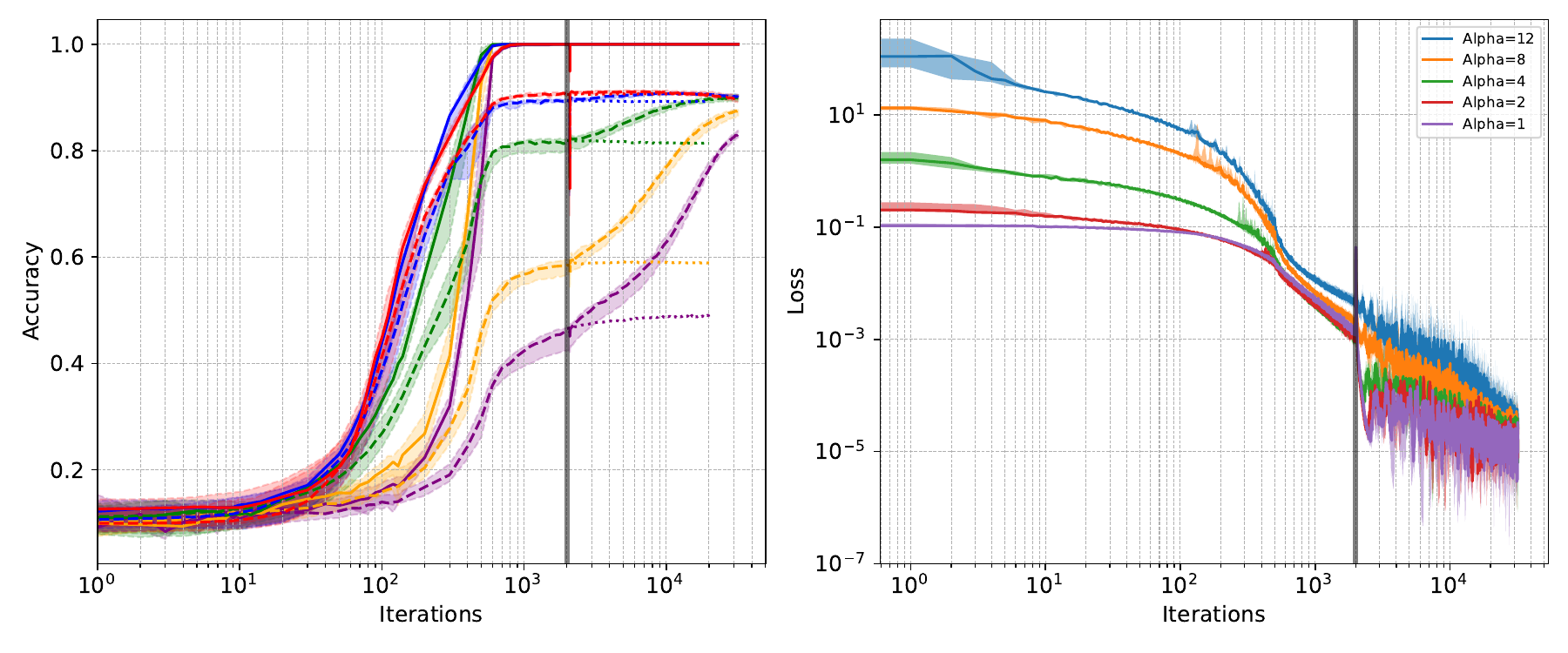}
    \caption{Plots of \Cref{fig:mnist-weight,fig:grokking-mnist-continue} with additional seeds. 
    Top plot corresponds to \Cref{fig:mnist-weight} and bottom plot corresponds to \Cref{fig:grokking-mnist-continue}.
    Shaded area indicate range, and lines indicate median.
    }
    \label{fig:seeds-mnist-accuracy}
\end{figure}
\end{document}